\theoremstyle{plain}
\newtheorem{theorem}{\protect\theoremname}
\newtheorem*{theorem*}{\protect\theoremname}
\newtheorem*{prop*}{\protect\theoremname}
\newtheorem{definition}{\protect\definitionname}
\theoremstyle{definition}
\theoremstyle{plain}
\newtheorem{lem}[definition]{\protect\lemmaname}
\newtheorem{claim}[definition]{\protect\claimname}
\newtheorem{cor}[definition]{\protect\corollaryname}
\newtheorem*{cor*}{\protect\corollaryname}
\newtheorem{remark}[definition]{\protect\remarkname}
\newtheorem{question}[definition]{\protect\questionname}
\newtheorem*{question*}{\protect\questionname}
\newtheorem*{assumption*}{\protect\assumptionname}
\newenvironment{customthm}[1]
  {\innercustomthm}
  {\endinnercustomthm}
\providecommand{\questionname}{Question}
\providecommand{\assumptionname}{Assumption}
\providecommand{\observationname}{Observation}
\providecommand{\corollaryname}{Corollary}
\providecommand{\definitionname}{Definition}
\providecommand{\lemmaname}{Lemma}
\providecommand{\claimname}{Claim}
\providecommand{\theoremname}{Theorem}
\providecommand{\exercisename}{Exercise}
\providecommand{\examplename}{Example}
\providecommand{\remarkname}{Remark}
\providecommand{\propname}{Proposition}
\newcommand{\vc}{\mathrm{VC}}
\newcommand{\R}{\mathbb{R}}
\renewcommand{\S}{\mathbb{S}}
\newcommand{\B}{\mathbb{B}}
\newcommand{\Q}{\mathbb{Q}}
\newcommand{\PP}{\mathbb{P}}
\newcommand{\X}{\mathcal{X}}
\renewcommand{\H}{\mathcal{H}}
\newcommand{\E}{\mathbb{E}}
\renewcommand{\P}{\mathcal{P}}
\newcommand{\A}{\mathcal{A}}
\newcommand{\sign}{\mathsf{sign}}
\newcommand{\ldim}{\mathtt{Ldim}}
\newcommand{\D}{\mathcal{D}}
\newcommand{\List}{\mathtt{LR}}
\newcommand{\N}{\mathbb{N}}
\newcommand{\TV}{\mathtt{TV}}
\newcommand{\F}{\mathcal{F}}
\newcommand{\G}{\mathcal{G}}
\newcommand{\oo}{\mathrm{o}}
\renewcommand{\L}{\mathcal{L}}
\newcommand{\dist}{\mathrm{dist}}
\newcommand{\maj}{\mathrm{maj}}
\author{Zachary Chase\footnote{Department of Mathematics, Technion. Supported by the European Union (ERC, GENERALIZATION, 101039692).} \and Bogdan Chornomaz\footnote{Department of Mathematics, Technion. Supported by the European Union (ERC, GENERALIZATION, 101039692).} \and Shay Moran\footnote{Departments of Mathematics, Computer Science, and Data and Decision Sciences, Technion and Google Research.
Robert J.\ Shillman Fellow; supported by ISF grant 1225/20, by BSF grant 2018385, by an Azrieli Faculty Fellowship, by Israel PBC-VATAT, by the Technion Center for Machine Learning and Intelligent Systems (MLIS), and by the European Union (ERC, GENERALIZATION, 101039692). Views and opinions expressed are however those of the author(s) only and do not necessarily reflect those of the European Union or the European Research Council Executive Agency. Neither the European Union nor the granting authority can be held responsible for them.
} \and Amir Yehudayoff}
\title{Local Borsuk-Ulam, Stability, and Replicability}
\begin{document}
\maketitle

\begin{abstract}
We use and adapt the Borsuk-Ulam Theorem from topology to derive limitations on list-replicable and globally stable learning algorithms. We further demonstrate the applicability of our methods in combinatorics and topology. 

We show that, besides trivial cases, both list-replicable and globally stable learning are impossible in the agnostic PAC setting. This is in contrast with the realizable case where it is known that any class with a finite Littlestone dimension can be learned by such algorithms. 
In the realizable PAC setting, we sharpen previous impossibility results and broaden their scope. 
Specifically, we establish optimal bounds for list replicability and global stability numbers in finite classes. This provides an exponential improvement over previous works and implies an exponential separation from the Littlestone dimension.
We further introduce lower bounds for weak learners, i.e., learners 
that are only marginally better than random guessing. 
Lower bounds from previous works apply only to stronger learners. 

To offer a broader and more comprehensive view of our topological approach, we 
prove a local variant of the Borsuk-Ulam theorem in topology and a result in combinatorics concerning Kneser colorings. In combinatorics, we prove that if \(c\) is a coloring of all non-empty subsets of \([n]\) such that disjoint sets have different colors, then there is a chain of subsets that receives at least \(1+ \lfloor n/2\rfloor\) colors (this bound is sharp).
In topology, we prove e.g.\ that for any open 
antipodal-free cover of the \(d\)-dimensional sphere, there is a point~\(x\) that belongs to at least \(t=\lceil\frac{d+3}{2}\rceil\) sets. 
\end{abstract}

\section{Introduction}

Topology is a 
field in mathematics that studies the properties of spaces that remain unchanged under continuous transformations, such as convergence, compactness, and connectedness. By examining the invariants of a space 
under continuous transformations, 
topology offers a lens through which the intrinsic structure of the space can be measured and understood. 
Topology brings important ideas to many areas of science, and
even though it focuses on the continuous, topology has furnished critical tools
even in discrete fields, like computer science and combinatorics.
In some sub-areas---like computational geometry---the contribution of topology is quite natural.
In some other sub-areas---like distributed computing, decision tree complexity and communication complexity--- the contribution is more surprising. 
A partial list of related papers includes \citep*{Monsky70, Lovasz78, Alon87, Kahn84, Smale87, Chaudhuri93, Borowsky93, Herlihy99, Saks00, Scheidweiler13, Hatami23}. For more details, we refer readers to the survey by~\cite{Bjorner96} and the book by~\cite{Matousek03BU}.


One of the most applicable results in topology is
the Borsuk-Ulam (BU) theorem~\citep{Borsuk1933}. 
It is an extension of the intermediate point theorem to high dimensional space.
It deals with continuous ways to map
the $n$-dimensional sphere $\S^n \subset \R^{n+1}$
into Euclidean space. 
It says that mapping a sphere into a low
dimensional space leads to a collision 
between antipodal points. Namely,
for every continuous function \( f: \S^n \to \mathbb{R}^n \), there is a point \( x \) in \( \S^n \) such that \( f(x) = f(-x) \).

In this work, we adapt the Borsuk-Ulam theorem to establish limitations of replicable and globally stable learning algorithms. To give a broader perspective of our topological approach, and to demonstrate its applicability, we apply it to derive a local variant of Borsuk-Ulam and a variant of the Lov\'{a}s-Kneser theorem in combinatorics. For convenience, our main results are denoted with letters: \Cref{t:localLS} to \Cref{t:finite}. All other theorems and claims are numbered.

\paragraph{Replicable Learning.}
Replicability is a foundational principle of the scientific method. A scientific study is deemed replicable if it consistently produces similar results when repeated under comparable conditions, even with new datasets. The exploration of replicable learning was initiated by \citet*{impagliazzo2022reproducibility} and further developed by \citet*{Bun23rep,Kalavasis23,Chase23rep,dixon2023list}. The works of \citet{Chase23rep,dixon2023list} notably utilized topological techniques, including variations of Sperner's lemma and the associated fixed point theorems. Building on their groundwork, our adaptation tackles several questions left open by these studies: 
\begin{itemize}
\item 
We first investigate the feasibility of globally stable and list-replicable algorithms in the agnostic PAC learning setting.
Our findings indicate that, besides trivial cases, both list-replicable and globally stable learning are unattainable (\Cref{t:repagn}). This sharply contrasts with the realizable PAC setting, where every class with a finite Littlestone dimension can be learned by such algorithms. Further amplifying this contrast is the fact that for other notions of algorithmic stability, such as differential privacy,
there is an equivalence between realizable and agnostic PAC learning. 

To circumvent the above impossibility result, we propose relaxed variants of agnostic list-replicability in which the size of the list can depend on the desired accuracy (\Cref{d:reperr} and \Cref{d:repopt}).

\item In the realizable PAC setting we derive impossibility results, offering significant quantitative improvements over prior work in terms of accuracy and list-replicability numbers. 

We show that the both the VC and dual VC dimensions lower bound the optimal list size achievable by weak learners, i.e.\ whose error is $<1/2$, only strictly better than a random guess (\Cref{t:weaklower}). This improves upon previous bounds by~\citet*{Chase23rep,dixon2023list} that apply to stronger learners whose error is $O(1/d)$, where $d$ is the VC dimension.

We also derive an optimal bound for the list-replicability and global stability numbers of finite classes (\Cref{t:finite}).
This yields an exponential improvement over~\citet*{Chase23rep}.
We further use this bound to deduce that for every $d$ there is a class with Littlestone dimension $d$ for which the list-replicability number is at least exponential in $d$ (\Cref{cor:dualVC}); this nearly matches an upper bound by~\citet*{Ghazi21approx}.
\end{itemize}

\paragraph{Combinatorics and Topology.}
We believe that our adaptation of BU might find additional applications in theoretical computer science. Thus,
to provide a more comprehensive understanding of our technique and to illustrate its potential, 
we apply it to derive two additional results: 
\begin{itemize}
\item A local variant of the classical Borsuk-Ulam theorem in topology.
We prove that for any open (or finite closed) cover of the \(d\)-dimensional sphere that is antipodal-free, there exists a point \(x\) included in at least \(t=\lceil\frac{d+3}{2}\rceil\) sets. Moreover, if such a cover consists of both open and closed sets (but only of such sets) then there is a point \(x\) included in a minimum of \(\lceil t/2\rceil\) sets. Somewhat surprisingly, both of these bounds are shown to be sharp (see  \Cref{t:localLS}).

\item A combinatorial result related to Kneser colorings, which is inspired by the first demonstration of the topological method in combinatorics by~\cite{Lovasz78}. 
We prove that if \(c\) is a coloring of non-empty subsets of \([n]\) such that disjoint sets receive different colors, then there exists a chain of subsets that is assigned at least \(1+ \lfloor n/2\rfloor\) colors. This bound is sharp as witnessed by assigning some color $i\in A$ to every nonempty subset $A \subseteq [n]$ of size at most~$n/2$, and a separate color $c$ to all subsets of $[n]$ of size greater than $n/2$ (see \Cref{t:CC}).
\end{itemize}








\section{Main Results}
To provide a streamlined progression of our topological approach, we begin with our results in topology and combinatorics. Thereafter, we discuss our results in learning theory. Readers particularly interested in the learning-theoretic results may choose to skip directly to that section. 

\subsection{Topology}

The first result we present is a variant of Borsuk-Ulam theorem that demonstrates the local nature of our approach. 
This variant is most similar to the Lusternik-Shcnirelmann (LS) theorem, which is one of the equivalent formulations of Borsuk-Ulam \citep*{LS30}.\footnote{In fact, Lusternik-Shcnirelmann proved this variant in 1930, three years before Borsuk's publication.} 
Let $\S^d=\{x\in\R^{d+1} : \|x\|_2 = 1\}$ denote the $d$-dimensional sphere. We say that $A\subseteq \S^d$ is antipodal-free if $A\cap(-A)=\emptyset$, where $-A=\{-x : x\in A\}$.
We say that a family $\mathcal{F}$ of (distinct) sets is antipodal-free if each $A\in\mathcal{F}$ is antipodal-free.
We say that $\mathcal{F}$ covers $\S^d$ if every $x\in\S^d$ belongs to some $F\in\mathcal{F}$.

\begin{theorem}[\citet*{LS30}]\label{t:LS}
Let $\mathcal{F}$ be a 
antipodal-free cover of the $d$-dimensional sphere $\S^d$ such that
every set $A \in\mathcal{F}$ is either open or closed. Then, $\lvert \mathcal{F}\rvert \geq d+2$.
This is sharp---there is such a cover of size $d+2$.
\end{theorem}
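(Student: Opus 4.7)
The plan is to reduce to the Borsuk-Ulam theorem via signed distance functions to the sets in $\mathcal{F}$. Assume for contradiction that $|\mathcal{F}| \leq d+1$, and pad with empty closed sets so that $\mathcal{F} = \{A_1,\ldots,A_{d+1}\}$. For each $i$ I would define a continuous function $g_i : \S^d \to \R$ whose sign encodes membership in $A_i$, with the convention chosen so that the zero value witnesses membership when $A_i$ is closed but witnesses non-membership when $A_i$ is open. Concretely, set $g_i(x) = -\dist(x, A_i)$ when $A_i$ is closed, and $g_i(x) = \dist(x, \S^d \setminus A_i)$ when $A_i$ is open. Then $g_i$ is continuous in both cases, and membership in $A_i$ corresponds to $g_i(x) = 0$ (closed case) or $g_i(x) > 0$ (open case), while the corresponding global sign constraint is $g_i \leq 0$ or $g_i \geq 0$ respectively.

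Applying the Borsuk-Ulam theorem to the continuous map $(g_1,\ldots,g_d) : \S^d \to \R^d$ produces a point $x^\ast$ with $g_i(x^\ast) = g_i(-x^\ast)$ for every $i \leq d$. The sign conventions force $x^\ast, -x^\ast \notin A_i$ for each $i \leq d$: if $A_i$ is closed, then $g_i \leq 0$ everywhere, so a common value of $0$ would place both $x^\ast$ and $-x^\ast$ in $A_i$, contradicting antipodal-freeness; hence the common value is strictly negative and neither point lies in $A_i$. The open case is symmetric, with a positive common value leading to the same contradiction. Since $\mathcal{F}$ covers $\S^d$, both $x^\ast$ and $-x^\ast$ must then lie in $A_{d+1}$, contradicting antipodal-freeness of $A_{d+1}$.

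For the sharp example I would take vertices $v_1,\ldots,v_{d+2}$ of a regular $(d+1)$-simplex inscribed in $\S^d$, so that $\sum_j v_j = 0$ and the $v_j$ span $\R^{d+1}$; define $A_i := \{x \in \S^d : \langle x, v_i\rangle > 0\}$, which is an open hemisphere and hence manifestly antipodal-free. Covering follows because $\sum_j \langle x, v_j\rangle = 0$ together with $\langle x, v_j\rangle \leq 0$ for every $j$ would force all these inner products to vanish, which combined with spanning yields $x = 0$, contradicting $x \in \S^d$. The main subtlety in the argument is precisely the design of the $g_i$ so as to cover the mixed open/closed case in a single application of Borsuk-Ulam; the opposite sign conventions are what guarantee that $g_i(x) = g_i(-x)$ implies neither $x$ nor $-x$ lies in $A_i$, independently of whether $A_i$ is open or closed.
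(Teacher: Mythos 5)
Your proof is correct. Note, though, that the paper does not prove Theorem~\ref{t:LS} at all---it is quoted as a classical result of \citet{LS30} (with \citet{Matousek03BU} as a reference)---so there is no in-paper argument to compare against line by line. Your derivation is the standard reduction to Borsuk--Ulam, and it is worth noting how it relates to the technique the paper \emph{does} use for its local variant (Theorem~\ref{t:localLS}): there, the open case is handled with essentially the same distance functions $\dist(x,\S^d\setminus A_i)$, while closed sets are dealt with by a separate reduction to the open case via the normality machinery of Lemma~\ref{lem:normal}; your signed-distance convention ($-\dist(x,A_i)$ for closed $A_i$, $+\dist(x,\S^d\setminus A_i)$ for open $A_i$) instead absorbs the mixed open/closed case into a single application of Borsuk--Ulam, which is cleaner for the global statement, and the case analysis (common value zero versus strictly signed) correctly rules out $x^\ast$ and $-x^\ast$ lying in any $A_i$ with $i\le d$, forcing both into $A_{d+1}$ and contradicting antipodal-freeness. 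One cosmetic repair: padding with \emph{empty} closed sets makes $g_i(x)=-\dist(x,\emptyset)$ ill-defined; either pad by repeating an existing set, set $g_i\equiv -1$ for an empty set, or simply pad the map with zero coordinate functions---any of these preserves the argument. Your sharpness construction (the $d+2$ open ``vertex hemispheres'' $\{x:\langle x,v_i\rangle>0\}$ of a regular inscribed simplex with $\sum_j v_j=0$) is also correct and matches the standard tight example; the paper's own tightness constructions, built from barycentric subdivisions, are aimed at the finer overlap-degree bounds of Theorem~\ref{t:localLS} rather than at this cardinality bound.
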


Theorem~\ref{t:LS} says that to cover the sphere by open antipodal-free sets, we must use many sets. In the variant we develop, we imagine a large family of open antipodal-free sets $\mathcal{F}$ that cover the sphere. So, the bound $|\mathcal{F}| \geq d+2$ trivially holds. 
{\em Is there some local behavior that must hold? Can it be that every point in the sphere is covered only once?} Thinking of the circle $\S^1$, it seems that some points must be covered at least twice. {\em What happens in the $2$-dimensional sphere? Can every point be covered twice?}


\begin{definition}[Overlap-degree]
Define the \emph{overlap-degree} of a family $\F$ of sets as the maximal integer $k$ 
such that there exist $k$ sets $A_1,\ldots, A_k\in \F$ that overlap, i.e.\ $\cap_{i=1}^k A_i\neq\emptyset$.    
\end{definition}

\begin{customthm}{A}[local LS]\label{t:localLS}
Let $\mathcal{F}$ be a finite antipodal-free cover of the $d$-dimensional sphere $\S^d$.
\begin{enumerate}
\item If all sets in $\mathcal{F}$ are open then
the overlap-degree of $\F$ is at least $t := \lceil (d+3)/2\rceil$. 
\item If all sets in $\mathcal{F}$ are closed then
the overlap-degree of $\F$ is at least $t$. 
\item If all sets in $\mathcal{F}$ are either open or closed then the overlap-degree of $\F$ is at least $\lceil t/2 \rceil$.
\end{enumerate}

\noindent
All three bounds above are sharp
$($for the third item, there is an antipodal-free
cover with some sets open and some sets closed
with overlap-degree $\lceil t/2 \rceil)$.




\end{customthm}

In other words, if we try to cover the sphere with a finite number of open antipodal-free sets, then somewhere in the sphere
there must be a large overlap between the sets. 
As a curious observation, let us note that
the bound provided by the global LS theorem does not change if the sets in the cover are allowed to be either open or closed. The bound in the local version, however, drops by a factor of $\approx 2$. 
As a simple example, the 1-dimensional circle can be partitioned into four sets, each of which is open or closed
by taking two disjoint open half-circles and two (closed) singletons.


Our proof of Theorem~\ref{t:localLS} relies on the BU theorem and a theorem by \cite{jan2000periodic}.
The sharpness is proved by two constructions
that are defined using barycentric sub-divisions of the simplex (the construction for the closed case immediately implies a construction for the open case). The second construction demonstrating the tightness of $\lceil t/2 \rceil$ is more intricate. It involves applying two consecutive barycentric sub-divisions. 

A slightly weaker bound in the special case when all the sets in the cover are open (or all the sets in the cover are closed) follows from a generalization of the BU that was proven by~\cite{Fan1952}. Fan's theorem is a generalization of Theorem~\ref{t:LS} which asserts the following. Let $A_1,\ldots, A_m$ be an antipodal-free open (or closed) cover of the sphere $\S^d$. Then, there are indices $i_1 < i_2< \ldots < i_{d+2}$ and a point $x\in \S^d$ such that \((-1)^jx\in A_{i_j}\) for all $j=1,2,\ldots d+2$.
It follows that either $x$ or $-x$ belongs to at least $\lceil (d+2)/2\rceil$ sets.
For some values of $d$, the bound
from Fan's theorem is off by a $-1$ from the true bound (on the overlap-degree of the cover).

\begin{remark}\label{rem-topological-sphere}
Theorem~\ref{t:LS} and Theorem~\ref{t:localLS} are valid and typically used in a more general setup. 
The sphere $\S^d$ can be replaced by any topological space
that is homeomorphic to it, and the negation $x \mapsto -x$ can be replaced by a general continuous
involution $\nu$ (that is, $\nu^2$ is the identity map).
In this generality, 
a set $A$ is considered antipodal-free if \(A \cap \nu(A) = \emptyset,\) where $\nu(A)=\{\nu(x) : x\in A\}$.
We assume this general setup throughout the paper when we use these theorems.
\end{remark}

We finish this section by proposing a possible direction for future research.
    It might be interesting to seek a local counterpart of the BU theorem (and not the LS theorem). 
    For example, consider a continuous $f:\S^d\to \mathbb{R}^n$ such that $f(x)\neq f(-x)$ for all $x\in\S^d$.
    Is there always a point $f(x)$ in the image of $f$ that has a large local dimension?
    By large local dimension we mean that every open neighborhood $U\subseteq\mathtt{Image}(f)$ of $f(x)$ has large dimension.
There are, of course, several reasonable options for a definition for ``dimension of $U$''.

\subsection{Combinatorics}

We now present an application in combinatorics inspired by the celebrated Lov\'asz-Kneser theorem~\citep{Lovasz78}. 
A Kneser coloring \(c\) assigns to every non-empty subset $A\subseteq [n]$ a color $c(A) \in \N$ such that if $A \cap B = \emptyset$
then
$c(A)\neq c(B)$. 
Denote by ${[n] \choose k}$ the set of all $k$-element subsets of $[n]$.

\begin{theorem}[Lov\'asz-Kneser]\label{t:LK78}
Let $n$ and $k$ be positive integers with $n\geq 2k$. Any Kneser coloring of the sets in ${[n] \choose k}$ requires at least $n-2k+2$ distinct colors. This bound is optimal---there exists a Kneser coloring using precisely $n-2k+2$ colors on the sets in ${[n] \choose k}$.
\end{theorem}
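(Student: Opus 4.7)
The plan is to prove the upper bound by an explicit Kneser coloring and the lower bound by a topological argument that invokes \Cref{t:LS} (Lusternik-Schnirelmann) for a carefully chosen antipodal-free open cover of a sphere.

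For the upper bound, I would set $c(A)=\min A$ whenever $\min A\leq n-2k+1$, and $c(A)=n-2k+2$ otherwise; this uses exactly $n-2k+2$ colors. Two disjoint $k$-sets cannot share the same minimum, and two distinct $k$-subsets both contained in $\{n-2k+2,\ldots,n\}$ (a $(2k-1)$-element ground set) cannot be disjoint, so the Kneser property is verified.

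For the lower bound, let $c:\binom{[n]}{k}\to[m]$ be any Kneser coloring and set $d=n-2k$. By Gale's theorem, there exist points $v_1,\ldots,v_n\in\S^d$ such that every open hemisphere of $\S^d$ contains at least $\lceil (n-d)/2\rceil = k$ of them. For $x\in\S^d$ write $H(x)=\{i\in[n]:\langle x,v_i\rangle>0\}$, and for each color $i\in[m]$ define
\[A_i \;=\; \bigl\{x\in\S^d : \text{some } S\in\binom{H(x)}{k}\text{ satisfies } c(S)=i\bigr\}.\]
Then (i) $\{A_i\}_{i=1}^m$ covers $\S^d$ because $|H(x)|\geq k$ for every $x$; (ii) each $A_i$ is open, since the defining inequalities $\langle x,v_j\rangle>0$ are open conditions, so a small perturbation of $x$ preserves $S\subseteq H(x)$; and (iii) each $A_i$ is antipodal-free, because if $x,-x\in A_i$ had witnesses $S\subseteq H(x)$ and $T\subseteq H(-x)$, then $S$ and $T$ would be disjoint $k$-sets of the same color, contradicting the Kneser property. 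Applying \Cref{t:LS} to this open antipodal-free cover of $\S^d$ then yields $m\geq d+2 = n-2k+2$.

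The main obstacle---and the single ingredient beyond \Cref{t:LS}---is ensuring that every open hemisphere contains at least $k$ of the chosen points, so that no deficiency set is needed. Without Gale's placement one could only arrange via general position that the two deficiency events $|H(x)|<k$ and $|H(-x)|<k$ never hold simultaneously, which would force an additional closed set $A_0=\{x:|H(x)|<k\}$ into the cover; \Cref{t:LS} applied to the resulting mixed open/closed cover would then only give $m+1\geq d+2$, i.e.\ $m\geq n-2k+1$, one short of the required bound. Gale's theorem closes precisely this gap, and the sharpness follows from the explicit construction above.
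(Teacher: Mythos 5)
Your proof is correct, but note that the paper does not prove this theorem at all---it states it and defers to Chapter 3.3 of Matou\v{s}ek's book. The proof in that chapter is Greene's: one places the $n$ points in \emph{general position} on the higher-dimensional sphere $\S^{d+1}$ (where $d=n-2k$), defines the open sets $A_i$ exactly as you do, and adds the closed ``deficiency'' set $A_0=\{x : |H(x)|<k\}$; general position on $\S^{d+1}$ forces $|H(x)|+|H(-x)|\geq n-(d+1)=2k-1$, so $A_0$ is antipodal-free, and \Cref{t:LS} applied to the mixed cover of $\S^{d+1}$ gives $m+1\geq d+3$, i.e.\ $m\geq n-2k+2$. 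Your route is B\'ar\'any's original proof: you stay on $\S^d$ and invoke Gale's theorem to place the points so that every open hemisphere already contains $k$ of them, which eliminates $A_0$ entirely. What Gale's theorem buys you is a purely open cover on the sphere of the ``right'' dimension; what Greene's trick buys is avoiding Gale's theorem (a nontrivial input in its own right) at the cost of a dimension shift and a closed set. Your closing paragraph correctly identifies why general position on $\S^d$ alone falls one short, though it omits that Greene's dimension-shift is the standard way to recover the missing $+1$ without Gale.

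Two minor points: your upper-bound construction ($c(A)=\min A$ truncated at $n-2k+1$) is the standard one and verifies correctly. And in the lower bound, it is worth stating explicitly that the $A_i$ need not be disjoint and that $m$ is taken to be the total number of colors actually used; \Cref{t:LS} requires a cover, not a partition, so this is fine, but the reader should see it.
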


Lov\'asz's proof for the lower bound in Theorem~\ref{t:LK78} utilizies the BU theorem, and is often regarded as pioneering the topological methods in combinatorics.  We refer the reader to Chapter 3.3 of~\cite*{Matousek03BU} for a short proof of Theorem~\ref{t:LK78} using Theorem~\ref{t:LS}.

We prove a related theorem using the local BU approach.
The Lov\'asz-Kneser theorem says that for every Kneser coloring, every antichain has many colors.
We prove that for every Kneser coloring, there is a chain with many colors. 

\begin{customthm}{B}[Colorful chains]\label{t:CC}
For any positive integer $n$ and a Kneser coloring $c$ of non-empty subsets of $[n]$, there exists a chain of subsets that receives at least $\lfloor n/2 \rfloor + 1$ distinct colors. That is,
there are $n$ distinct sets $A_1 \subset A_2 \subset \ldots \subset A_n \subseteq [n]$
so that $c$ has $\lfloor n/2 \rfloor + 1$ many colors
on them.
This bound is sharp---there are Kneser colorings that assign at most $\lfloor n/2 \rfloor + 1$ colors to all chains.
\end{customthm}
The upper bound in Theorem~\ref{t:CC} is simple. Color every nonempty subset $A \subseteq [n]$ of size at most~$n/2$ by some color $i\in A$, and assign a separate color to all subsets of $[n]$ of size greater than $n/2$. 
This coloring assigns no more than $\lfloor n/2 \rfloor + 1$ colors to any chain. Moreover, the total number of colors in this construction is $n+1$, which almost matches the trivial lower bound $n$ on the number of colors of any Kneser coloring.
The proof of the lower bound is more demanding, 
and follows similar lines to the proof of
Theorem~\ref{t:localLS}. 

\smallskip

We conclude this section by posing a question for future research.
\begin{question}
For \( p \leq n \), a \( p \)-Kneser coloring \( c \) assigns a color \( c(A) \) to every non-empty subset \( A \subseteq [n] \) such that whenever \( p \) distinct sets \( A_1, \ldots, A_p \) have the same color, their intersection \( \cap_{i=1}^p A_i \) is non-empty. A \( 2 \)-Kneser coloring is simply a Kneser coloring. Consider a \( p \)-Kneser coloring which assigns a color \( i \in A \) to each subset \( A \subseteq [n] \) of size at most \( n(1-1/p) \) and a distinct color to subsets of \( [n] \) larger than this size. This way each chain receives at most \( \lfloor n \cdot (1-1/p) \rfloor + 1 \) colors. Theorem~\ref{t:CC} establishes this is optimal for \( p = 2 \). Is this the case for all \( p \leq n \)?
\end{question}

\subsection{Learning Theory}\label{sec:resultsrep}

In this section, we outline our findings in learning theory. 
We use standard notation from learning theory; we refer the reader to the book by~\cite{Shalev-Shwartz2014} for a detailed introduction.
Let $X$ be a set called the domain and let $Y=\{\pm 1\}$ denote the label set. A hypothesis/concept/classifier is a mapping $h:X\to Y$.
A concept class is a set of hypotheses $\H\subseteq Y^X$.  An example is an ordered pair $z=(x,y)\in X\times Y$. A sample $S$ is a finite sequence $S=(z_i)_{i=1}^n$ of examples.

A classification problem is defined by a distribution~$\D$ over examples. The learner does not know $\D$, but is able to collect a sample $S$ of i.i.d.\ examples from~$\D$
which she uses to build a classifier $h\colon X\to Y$.
Formally, a learning rule is a (possibly randomized) mapping $\A:(X\times Y)^\star \to Y^X$, where $(X\times Y)^*$ is the set of all samples (i.e.\ finite sequences of examples).
The population loss of an hypothesis $h$ with respect to a distribution $\D$, denoted $L_\D(h)$, is the probability that $h$ misclassifies a random example from $\D$, namely
$\L_\D(h)=\Pr_{(x,y)\sim \D}[h(x)\neq y]$. The empirical loss of an hypothesis $h$ with respect to a sample $S$, denoted $L_S(h)$, is the fraction of examples in $S$ that are misclassified by $h$, namely $\L_S(h)= \frac{1}{n}\sum_{i=1}^n 1[h(x_i)\neq y_i]$, where $S=(x_i,y_i)_{i=1}^n$.


Any effective 
theory of learning should somehow
restrict the generality we are operating in.
A natural option is to set some limitations on the data distribution $\D$. The PAC model proposes 
to use an underlying concept class $\H\subseteq Y^X$,
which helps the analyst to reason about the problem.
The simplest assumption is that $\D$ is \emph{realizable},
that $\H$ contains hypotheses with arbitrarily small error (i.e.\ $\inf_{h \in \H} L_\D(h) = 0$).
    Another well-studied setting is the {\em agnostic} one in which we no longer assume that $\D$ is realizable,
    and accordingly we only require the learning rule to output hypotheses whose loss is competitive with $\L_\D(\H):=\inf_{h\in \H}\L_\D(h)$.
    
In both settings, the class $\H$ is used by the analyst to make algorithmic choices.
 In the realizable setting, the analyst 
 assumes that the data comes from the class $\H$.
 In the agnostic setting, the analyst does not assume anything on the input data,
 and the class $\H$ is thought of as the criterion of success; the output should be as good as any function in $\H$.

\paragraph{Replicability and Stability.}
Replicability is a foundational principle of the scientific method. A scientific study is deemed replicable if it consistently produces similar results when repeated under comparable conditions, even with new datasets. 

Replicability and the related concept of global stability (formally defined below) have been recently introduced and studied within the context of machine learning theory. Global stability was first introduced by~\citet*{BunLM20} and was employed as an algorithmic tool for designing privacy-preserving learning rules by both \citet*{BunLM20} and \citet*{Ghazi21approx}. The exploration of replicability in PAC learning began with the work of \citet*{impagliazzo2022reproducibility}. \citet*{Chase23rep} studied global stability as a form of replicability and proved its equivalence with list-replicability, a concept introduced concurrently by~\citet*{dixon2023list}. \citet{dixon2023list} also defined the notion of {\em strong replicability} which is equivalent to global stability. 

A learning rule $\A$ is called a $(\rho,\varepsilon)$-globally stable learner
for $\H$ if there exists$n$ so that for every distribution~$\D$ that is realizable by $\H$, there exists a predictor $h = h_\D$ such that $\L_{\D}(h) < \varepsilon$ and 
\[
\Pr_{S \sim \D^n}[\A(S) = h] \geq \rho.
\]
The rule $\A$ is called an agnostic $(\rho,\varepsilon)$-globally stable learner
for $\H$ if there exists $n = n(\rho,\varepsilon)$ so that for every distribution~$\D$ there exists a predictor $h = h_\D$ such that $\L_{\D}(h) < \L_{\D}(\H) + \varepsilon$ and 
$\Pr_{S \sim \D^n}[\A(S) = h] \geq \rho.$
The class $\H$ is {\em learnable} with global stability number $\rho$ 
    if for every $\varepsilon>0$ there exists an $(\varepsilon,\rho)$-globally stable learner for $\H$.
Similarly, $\H$ is {\em agnostic} learnable with global stability number $\rho$ if
for every $\varepsilon>0$ there exists an agnostic $(\varepsilon,\rho)$-globally stable learner for $\H$.
    
\begin{definition}[Global stability number]
The global stability number $\rho(\H)$ of the class $\H$ is the supremum over all $\rho \in [0,1]$ for which $\H$ is learnable with global stability number $\rho$.
  We say that $\H$ is globally stable learnable if $\rho(\H) >0$.
The agnostic global stability parameter $\rho_{\mathtt{agn}}(\H)$ is defined analogously. 
    We say that $\H$ is agnostically globally stable learnable if $\rho_{\mathtt{agn}}(\H) >0$.
\end{definition}
Notice that if $\H$ is agnostically
globally stable learnable then it is also globally stable learnable and that $0 \leq \rho_{\mathtt{agn}}(\H) \leq \rho(\H)$.

A learning rule $\A$ is called $(\varepsilon,L)$-list replicable learner for $\H$
if for every $\delta>0$, there exists $n=n(\varepsilon,L,\delta)$ such that for every distribution $\D$ that is realizable by~$\H$,
there exist hypotheses $h_1,h_2,\ldots,  h_L$ such that
$$\Pr_{S\sim \D^n}[\A(S) \in\{h_1,\ldots, h_L\}] \geq 1-\delta$$
and 
for all $\ell \in [L]$,
$$\L_\D(h_\ell)< \varepsilon.$$
The rule $\A$ is called $(\varepsilon,L)$-list replicable {\em agnostic} learner for the class $\H$
if for every $\delta>0$, there exists $n=n(\varepsilon,L,\delta)$ such that for every distribution $\D$ there exist hypotheses $h_1,h_2,\ldots,  h_L$ such that
$\Pr_{S\sim \D^n}[\A(S) \in\{h_1,\ldots, h_L\}] \geq 1-\delta$ and $\L_\D(h_\ell)\leq \L_{\D}(\H) + \varepsilon$
for all $\ell \in [L]$.
We say that $\H$ is  learnable with replicability list size $L$
    if for every $\varepsilon>0$, there exists an $(\varepsilon,L)$-list replicable  learner for $\H$.
The definition of {\em agnostically} learnable with replicability list size $L$ is analogous. 

\begin{definition}[List replicability number]
    The list replicability number of $\H$ is defined as
    \[\List(\H) := \min\bigl\{L \in \N : \text{$\H$ is learnable with replicability list size $L$} \bigr\}.\]
    We say that $\H$ is replicable list learnable if $\List(\H) < \infty$.
The agnostic list replicability number $\List_{\mathtt{agn}}(\H)$ is defined analogously.
We say that $\H$ is agnostically replicable list learnable if $\List_{\mathtt{agn}}(\H) < \infty$.
\end{definition}
Observe that $\List(\H) \leq \List_{\mathtt{agn}}(\H) \leq \infty$ and thus if $\H$ is agnostically replicable list learnable then it is also replicable list learnable.

\begin{theorem}[\citet*{Chase23rep}]\label{t:rep-gs}
For every class $\H$,
\[ \List(\H)\cdot \rho(H)= 1 \quad\text{and}\quad \List_{\mathtt{agn}}(\H)\cdot \rho_{\mathtt{agn}}(H)= 1.\]
In particular, $\H$ is (agnostically) replicable list learnable if and only if it is (agnostically) globally stable learnable.
\end{theorem}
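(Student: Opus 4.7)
The plan is to prove the reciprocal identity by two matched inequalities whose combination forces $\rho(\H) \in \{0\} \cup \{1/L : L \in \N\}$ and $\List(\H) = 1/\rho(\H)$. The agnostic version follows verbatim from the same argument, replacing $\L_\D(h) < \varepsilon$ with $\L_\D(h) < \L_\D(\H) + \varepsilon$ throughout; the empirical-loss estimation step works identically whether the benchmark is $0$ or $\L_\D(\H)$.

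The easy inequality $\List(\H) \leq L \Rightarrow \rho(\H) \geq 1/L$ is a pigeonhole observation. If $\A$ is $(\varepsilon, L)$-list replicable with failure probability $\delta$, then for each realizable $\D$ the guaranteed list $h_1, \ldots, h_L$ of low-error hypotheses collectively captures $\A(S)$ with probability $\geq 1-\delta$, so some $h_i$ is output with probability at least $(1-\delta)/L$. This makes $\A$ a $\bigl((1-\delta)/L,\varepsilon\bigr)$-globally stable learner. Sending $\delta, \varepsilon \to 0$ yields $\rho(\H) \geq 1/L$.

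The harder inequality is $\rho(\H) > 1/(L+1) \Rightarrow \List(\H) \leq L$, which I would prove by a boosting construction. Start with a $(\rho, \varepsilon)$-globally stable learner $\A$ with $\rho > 1/(L+1)$, set $\alpha := \rho - 1/(L+1) > 0$, and let $\mu_\D$ denote the output distribution of $\A(S)$ for $S \sim \D^n$. The key counting observation is that the \emph{popular set} $P_\D := \{h : \mu_\D(h) > 1/(L+1)\}$ has cardinality at most $L$, since the probabilities sum to at most $1$. Build $\A'$ by taking a larger sample, running $\A$ on $k$ independent subsamples to obtain candidates $\hat h_1, \ldots, \hat h_k$, estimating each candidate's population loss on an additional held-out subsample of size $m$, and outputting a canonical (e.g.\ lex-smallest) candidate $\hat h_i$ that satisfies (a) empirical frequency $\geq 1/(L+1) + \alpha/2$ and (b) empirical loss $\leq \varepsilon + \eta$, defaulting arbitrarily if none exists. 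Standard Chernoff and Hoeffding bounds, with $k$ and $m$ large enough, ensure with probability $\geq 1-\delta$ that every admissible candidate has true mass $\mu_\D(\cdot) > 1/(L+1) + \alpha/4$ (so lies in a set of size $\leq L$) and true loss $\leq \varepsilon + 2\eta$, and that the hypothesis $h_\D$ from global stability satisfies both thresholds, so the default case is not triggered. Hence $\A'$ is $(\varepsilon + 2\eta, L)$-list replicable with list $P_\D \cap \{h : \L_\D(h) < \varepsilon + 2\eta\}$; since $\varepsilon$ and $\eta$ can be made arbitrarily small, $\List(\H) \leq L$.

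The main subtlety is precisely the joint filtering step: global stability promises that \emph{one} popular hypothesis has low loss, not that every popular one does, so naively outputting an empirical mode could return a popular but inaccurate hypothesis. Intersecting ``empirically popular'' with ``empirically low loss'' repairs this, but the two thresholds have to carry enough slack ($\alpha/2$ and $\eta$ respectively) to absorb concentration error while remaining sharp enough that the resulting admissible popular set still has cardinality $\leq L$---this is what forces the critical gap $> 1/(L+1)$ rather than just $\geq 1/L$ in the hypothesis, and ultimately what makes $\rho(\H)$ collapse onto the reciprocals of integers.
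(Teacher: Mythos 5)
The paper does not actually prove this statement -- it is imported from \citet{Chase23rep}, with only the remark that ``the same proof applies in the agnostic setting'' -- so there is no in-paper argument to compare against; judged on its own, your proposal follows what is essentially the standard route (pigeonhole for one inequality, heavy-hitter amplification with a joint frequency/accuracy filter for the other), and the overall architecture, including the observation that one must intersect ``empirically popular'' with ``empirically accurate'' and the derivation of the exact identity from the two inequalities plus the convention $\infty\cdot 0=1$, is sound.

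Two points need repair. First, and more substantively, the agnostic case does \emph{not} follow verbatim: your filter (b) ``empirical loss $\leq \varepsilon+\eta$'' becomes ``empirical loss $\leq \L_\D(\H)+\varepsilon+\eta$'' after the announced replacement, and the algorithm cannot evaluate this threshold because $\L_\D(\H)$ is unknown (and, for infinite $\H$, not estimable from the sample in general). The fix requires changing the algorithm, not just the analysis: among the candidates passing the frequency filter, output the one of minimal held-out empirical loss (ties broken canonically). Since the stable hypothesis $h_\D$ passes the frequency filter with high probability and has loss $<\L_\D(\H)+\varepsilon$, the selected candidate has true loss at most $\L_\D(\H)+\varepsilon+2\eta$ with high probability and lies in the popular set $P_\D$, so the list $P_\D\cap\{h:\L_\D(h)\leq\L_\D(\H)+\varepsilon+2\eta\}$ of size at most $L$ works. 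Second, the claim that ``standard Chernoff'' gives that every admissible candidate has true mass $>1/(L+1)+\alpha/4$ hides a union bound over a possibly infinite, data-dependent set of hypotheses: the candidates are the same draws used to compute the empirical frequencies, and $\mu_\D$ may have infinitely many atoms. This is true but needs either a two-stage scheme (draw a candidate pool, then estimate frequencies on fresh runs of $\A$, so the union bound ranges over a fixed finite pool) or a tail bound showing that the total probability that \emph{some} atom of mass $\leq 1/(L+1)+\alpha/4$ attains empirical frequency $\geq 1/(L+1)+\alpha/2$ is small, summing $\bigl(e\,p/t\bigr)^{tk}$-type bounds over the low-mass atoms. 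With these two adjustments the argument goes through and yields the stated identity in both the realizable and agnostic settings.
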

\begin{remark}
In the above theorem we use the convention that $\infty\cdot 0 = 1$, thus $\List(\H)=\infty$ if and only if~$\rho(H)=0$.
\end{remark}
Theorem~\ref{t:rep-gs} provides a quantitative equivalence between the list replicability and global stability numbers. We remark that the equivalence is algorithmic in the sense that globally stable algorithms can be efficiently converted to list replicable algorithms and vice versa. \cite{Chase23rep} prove Theorem~\ref{t:rep-gs} only in the realizable setting, however, the same proof applies in the agnostic setting.

\subsubsection{Agnostic PAC Learning}
The following theorem proved by \citet*{BunLM20} characterizes the classes that can be learned by a globally stable/list-replicable learning rule in the realizable setting.
The theorem relates $\rho(\H)$ to the Littlestone dimension  
$\ldim(\H)$ of $\H$.

\begin{theorem}[\citet*{BunLM20}]\label{t:gsld}
The following statements are equivalent for a concept class $\H$:
\begin{itemize}
    \item $\ldim(\H)<\infty$.\item $\rho(\H)>0$ (equivalently, $\List(\H)<\infty$). 
\end{itemize}
\end{theorem}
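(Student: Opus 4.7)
The equivalence $\rho(\H) > 0 \iff \List(\H) < \infty$ is already supplied by Theorem~\ref{t:rep-gs}, so the plan reduces to proving $\ldim(\H) < \infty \iff \rho(\H) > 0$. I would split this into two implications and attack them separately, with the sufficiency of finite Littlestone dimension being the substantially harder direction.

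For the necessity direction ($\rho(\H) > 0 \implies \ldim(\H) < \infty$), I would argue by contrapositive. If $\ldim(\H) = \infty$, then for every integer $N$ there is a complete binary tree of depth $N$ whose internal nodes are labeled by domain points and whose root-to-leaf branches are realized by hypotheses in $\H$. Fix a candidate $(\varepsilon,\rho)$-globally stable learner $\A$ with sample complexity $n = n(\rho,\varepsilon)$ and choose $N$ much larger than $n/\rho$. For each leaf $v$, let $\D_v$ be the distribution uniform on the $N$ labeled examples along the root-to-$v$ branch; this is realizable by $\H$, so global stability yields some $h_v$ with $\Pr_{S\sim\D_v^n}[\A(S)=h_v]\geq \rho$ and $\L_{\D_v}(h_v) < \varepsilon$. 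The latter forces $h_v$ to agree with the branch labels of $v$ on at least $(1-\varepsilon)N$ of the $N$ path points, so for small $\varepsilon$ the $h_v$'s are essentially distinct across the $2^N$ leaves. Since a sample of size $n \ll N$ from a branch reveals too little to pin down $v$, many leaves $v \neq v'$ share many samples in the supports of $\D_v^n$ and $\D_{v'}^n$; on such a common sample $\A$ would have to output both $h_v$ and $h_{v'}$, contradicting $h_v \neq h_{v'}$. A careful averaging over leaves makes this rigorous and drives $\rho$ to $0$ as $N$ grows.

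For the sufficiency direction ($\ldim(\H) < \infty \implies \rho(\H) > 0$), I would follow the approach of~\citet*{BunLM20}. Let $d := \ldim(\H)$ and recall the Standard Optimal Algorithm (SOA), a deterministic online learner for $\H$ that makes at most $d$ mistakes on any realizable sequence. The plan is to convert SOA into a globally stable batch learner by drawing a sample $S \sim \D^n$ with $n$ sufficiently large (depending on $d$ and $\varepsilon$), running SOA on many random orderings or random sub-samples of $S$ of size proportional to $d$, and outputting the most frequently produced candidate. The key observation is that the set of hypotheses SOA can possibly produce lies in a finite menu whose size depends only on $d$: each ordering leaves a mistake-signature of size at most $d$, and these signatures determine the final version space up to $2^{O(d)}$ equivalence classes. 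Combined with uniform convergence for classes of finite VC dimension (using the classical bound $\vc(\H) \leq \ldim(\H)$), a pigeonhole over this menu guarantees that one specific hypothesis is produced with probability at least $2^{-O(d)}$, yielding $\rho(\H) \geq 2^{-O(d)} > 0$.

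The principal obstacle is the quantitative analysis in the sufficiency direction: one must show that the popular hypothesis produced by the aggregation procedure is essentially the \emph{same} across independent draws of $S \sim \D^n$, not merely popular within a fixed sample. This is where the BLM construction is most delicate; it relies on a careful probabilistic coupling between SOA's trajectory on random sub-samples and the underlying distribution $\D$, exploiting that for large $n$ a random sub-sample of size $\approx d$ behaves, in distribution, like a sample from $\D$ itself. Once this coupling is in place, $\rho(\H) > 0$ falls out as the desired quantitative lower bound.
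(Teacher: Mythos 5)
The paper itself does not prove this statement; it imports it from \citet*{BunLM20}, so your sketch can only be measured against the known arguments, and in both directions it glosses over precisely the steps that make those arguments hard.

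For the necessity direction ($\rho(\H)>0 \Rightarrow \ldim(\H)<\infty$), your contradiction step does not work as stated. A $(\rho,\varepsilon)$-globally stable learner is randomized and only promises some $h_v$ with $\Pr_{S\sim \D_v^n}[\A(S)=h_v]\geq\rho$, where $\rho$ may be tiny; on a sample lying in the support of both $\D_v^n$ and $\D_{v'}^n$ the learner may perfectly well output $h_v$ with probability $\rho$ and $h_{v'}$ with probability $\rho$, so ``$\A$ would have to output both'' is not a contradiction. Worse, the leaves whose samples are actually confusable are those whose branches share a long prefix, and for such leaves the distributions $\D_v,\D_{v'}$ are close, so $\varepsilon$-accuracy does not force $h_v\neq h_{v'}$ at all: a single hypothesis can serve many nearby leaves. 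The class of thresholds (Littlestone dimension infinite, VC dimension $1$) is the standard witness that no such packing/averaging argument can succeed; the known impossibility for thresholds goes through the conversion of a globally stable learner into a differentially private one plus the Ramsey-theoretic lower bound of Alon, Livni, Malliaris, and Moran, and none of that machinery is replaced by anything in your sketch.

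For the sufficiency direction ($\ldim(\H)<\infty \Rightarrow \rho(\H)>0$), the claim that SOA's possible outputs lie in a ``finite menu whose size depends only on $d$'' is false over an infinite domain: the hypotheses SOA produces depend on the actual sample points, so while the menu is finite for each fixed sample $S$, it varies with $S$, and a pigeonhole within one sample gives a \emph{popular-within-$S$} hypothesis, not one output with probability $2^{-O(d)}$ over independent draws $S\sim\D^n$. You correctly identify this as the principal obstacle, but the ``coupling'' you invoke is exactly the missing content; the construction of \citet*{BunLM20} resolves it with a recursive/tournament procedure that injects auxiliary samples and exploits the mistake bound of SOA to show that for some level $k\leq d$ a single fixed hypothesis is output with probability bounded below (and the resulting bound is doubly exponential in $d$, consistent with the $\List(\H)\leq\exp(\exp(\ldim(\H)))$ bound quoted in this paper, not $2^{-O(d)}$ as your sketch suggests). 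As written, both implications therefore contain unproved core steps.
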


Somewhat surprisingly, in the following result we show that Theorem~\ref{t:gsld} does not extend to the agnostic case. It says that if $\H$ is agnostically learnable by a globally stable/list-replicable algorithm then $\H$ must be finite.
In particular, list-replicable learnability in the realizable case does not imply list-replicable learnability in the agnostic case. This answers negatively a question posed by~\cite{Chase23rep}.

\begin{customthm}{C}[Agnostic replicability and global stability]\label{t:repagn}
The following statements are equivalent for a concept class $\H$:
\begin{itemize}
\item $\H$ is finite.
\item $\rho_{\mathtt{agn}}(\H)>0$ (equivalently, $\List_{\mathtt{agn}}(\H)<\infty$). 
\end{itemize}
\end{customthm}
Every finite class $\H$ is trivially agnostically list replicable learnable by any agnostic proper learner (with the list being $|\H|$). The converse direction is proved using Theorem~\ref{t:localLS} and the notion of $\gamma$-interpolation from the work of~\citet*{AlonGHM23}. 

In other words, \Cref{t:repagn} asserts that agnostic list-replicable learning is possible only when it is trivially possible. Nevertheless, it might still be interesting to explore how the agnostic list-replicability number depends on the size of the class and on $n$ in the best and worst cases.





Theorem~\ref{t:repagn} suggests that it could, perhaps, be beneficial to relax the definition of list replicable learnability in the agnostic setting. 
We propose two possible relaxations:
\begin{definition}[Excess-error dependent replicability]\label{d:reperr}
We say that $\H$ is agnostically learnable with excess-error dependent replicability
    if for every $\varepsilon>0$ there exists $L=L(\varepsilon)$ and a learning rule $\A$ with the following guarantees. For every $\delta>0$ there exists $n=n(\delta)$ such that for every distribution $\D$ there is a list of $L(\varepsilon)$ hypotheses $h_1,\ldots h_L$  such that $\L_{\D}(h_\ell)\leq \L_\D(\H) + \varepsilon$ for all $\ell$ and 
    \[\Pr_{S\sim \D^n}[A(S)\in \{h_1,\ldots, h_L\}]\geq 1-\delta.\]
\end{definition}

\begin{definition}[Class-error dependent replicability]\label{d:repopt}
We say that $\H$ is agnostically learnable with class-error dependent replicability
    if for every $\gamma>0$ there exists $L=L(\gamma)$ and a learning rule $\A$ with the following guarantees. For every $\varepsilon,\delta>0$ there exists $n=n(\varepsilon,\delta)$ such that for every distribution $\D$ for which $\L_\D(\H)\leq \gamma$ there is a list of $L(\gamma)$ hypotheses $h_1,\ldots h_L$  such that $\L_{\D}(h_\ell)\leq \L_\D(\H) + \varepsilon$ for all $\ell$ and 
    \[\Pr_{S\sim \D^n}[A(S)\in \{h_1,\ldots, h_L\}]\geq 1-\delta.\]
\end{definition}

What types of classes satisfy these definitions? Understanding the scope and nuances of these definitions could potentially shed light on the broader landscape of replicable learnability. We leave these questions for future work.

\subsubsection{Realizable PAC Learning}

We now turn to study quantitative bounds on the list-replicability and global stability numbers in the realizable case. For a class $\H$, the dual VC dimension of $\H$, denoted $\vc^*(\H)$, is the VC dimension of the dual class $\H^\star \subseteq \{\pm 1\}^\H$ of $\H$ defined by $x(h) = h(x)$.

\cite{Chase23rep} proved that $\List(\H)\geq \vc(\H)$ for every class~$\H$. In the proof, they showed that every list replicable algorithm that learns $\H$ with error $\varepsilon \leq O(1/d)$ must use list size of at least $d$. Their proof, however, does not provide any information in the case when the error of the algorithm is larger (e.g.\ a small constant). 
Likewise, the lower bound presented by \citet*{dixon2023list} also applies to learners whose error is $O(1/d)$.
This raises the question of whether it is possible to list replicably learn with error, say $\varepsilon=0.1$ or even $\varepsilon = 0.49$, and list size $\ll d$? The following result provides a negative answer, both in terms of $\vc(\H)$ and $\vc^\star(\H)$. We define $\List(\H, \varepsilon)$ to be the minimal $L$ for which there is an $(\varepsilon,L)$-list replicable learner of $\H$. In particular, $\List(\H, \varepsilon)$ is increasing in $\varepsilon$ and $\List(\H) = \lim_{\varepsilon \rightarrow 0} \List(\H, \varepsilon)$.

\begin{customthm}{D}[Lower bound for list replicable weak learners]\label{t:weaklower}
Let $\H$ be a concept class. Then, for any $0 < \varepsilon < 1/2$, it holds 
$\List(\H, \varepsilon)\geq \max\{ 1 + \lceil \vc(\H)/2\rceil, 1 + \lfloor \vc^\star(\H)/2\rfloor\}$.
\end{customthm}

As the $ \vc^*(\H)$ bound on $\List(\H, \varepsilon)$ is also a bound on $\List(\H)$, we have an easy corollary that applies to every class.
\begin{cor}\label{cor:dualVC}
Let $\H$ be a concept class, then $\List(\H) \geq \max\{\vc(\H), 1 + \lfloor \vc^\star(\H)/2\rfloor\}$. This bound is sharp in both parameters, that is, for all nontrivial values of $\vc(\H)$ and $\vc^\star(\H)$, there are classes whose list-replicability number is $\vc(\H)$, and those whose is $1 + \lfloor \vc^\star(\H)/2\rfloor$.
\end{cor}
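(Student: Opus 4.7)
The plan is to split the argument into the lower bound and the two sharpness constructions. The lower bound is immediate from results already on the table. The $\vc(\H)$ summand comes directly from the earlier bound $\List(\H) \geq \vc(\H)$ of \citet*{Chase23rep}. The $1 + \lfloor \vc^\star(\H)/2\rfloor$ summand follows from Theorem~\ref{t:weaklower} applied with any $\varepsilon \in (0,1/2)$: since $\List(\H,\varepsilon)$ is monotone in $\varepsilon$ and $\List(\H) = \lim_{\varepsilon\to 0}\List(\H,\varepsilon)$, we have $\List(\H)\geq \List(\H,\varepsilon)\geq 1 + \lfloor \vc^\star(\H)/2\rfloor$. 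Taking the maximum of the two inequalities delivers the claimed bound.

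For the sharpness in $\vc^\star(\H)$, I would exhibit, for each positive integer $d^\star$, a class $\H$ with $\vc^\star(\H) = d^\star$ and $\List(\H) = 1 + \lfloor d^\star/2\rfloor$. The natural witness is a finite class, since for finite classes the list-replicability number is pinned down exactly by Theorem~\ref{t:finite}. Concretely, take the dual of the Boolean cube setup: choose a finite domain $X$ and a finite class $\H$ of size tailored so that $\vc^\star(\H) = d^\star$ (for instance, take a family that dually shatters a $d^\star$-element set while being otherwise minimal). Theorem~\ref{t:finite} then identifies $\List(\H)$ with the predicted value $1 + \lfloor d^\star/2\rfloor$, matching the lower bound. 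For the sharpness in $\vc(\H)$, I would produce, for each $d$, a class of VC dimension $d$ with list-replicability exactly $d$. Here the guiding principle is to keep the dual VC small so that the dual part of the corollary does not overshoot $\vc(\H) = d$; classes of disjoint ``coordinate-style'' concepts (e.g.\ a class of indicator functions on $d$ pairwise far-apart blocks, combined with a proper list-replicable ERM that outputs one of $d$ canonical hypotheses determined by the empirically best block) will realize this. The upper bound $\List(\H)\leq d$ is obtained by a direct algorithm whose output is restricted to a size-$d$ list driven by the shattered set.

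The main obstacle is the sharpness half, not the lower bound. For the dual VC sharpness, the entire weight sits on Theorem~\ref{t:finite}, so the step reduces to selecting finite classes that attain every admissible value of $\vc^\star$ without inflating the other complexity parameters. For the VC sharpness, the delicate point is designing, for each $d$, a list-replicable algorithm whose output list has size exactly $\vc(\H) = d$; this typically requires a careful tie-breaking rule on an ordered shattered set to ensure the algorithm concentrates on a list of $d$ hypotheses across all realizable distributions, and this construction is the place where one has to be careful to align the list size with $d$ rather than some constant multiple of it.
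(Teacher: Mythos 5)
Your lower bound and your treatment of the dual-VC sharpness are correct and essentially identical to the paper's argument: the $\vc(\H)$ part is quoted from \citet{Chase23rep}, the $1+\lfloor \vc^\star(\H)/2\rfloor$ part follows from \Cref{t:weaklower} because $\List(\H)\geq \List(\H,\varepsilon)$ for $0<\varepsilon<1/2$, and the dual-VC witness is the projection class $\H_m$ of \Cref{t:finite}. One caveat on wording: \Cref{t:finite} does not ``pin down exactly'' the list-replicability number of an arbitrary finite class; it gives the upper bound $1+\lfloor \lvert\H\rvert/2\rfloor$ in terms of the \emph{size} of the class. Your witness works precisely because its size equals its dual VC dimension (it is fully dually shattered, i.e.\ it is $\H_m$), so the upper bound from \Cref{t:finite} meets the lower bound from \Cref{t:weaklower}.

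The genuine gap is in the sharpness of the $\vc$ bound. The paper does not reprove it; it cites \citet{Chase23rep}, where a class with $\List(\H)=\vc(\H)$ is exhibited. You instead propose to build such a class, and the sketch does not work as stated. A class consisting of the $d$ indicator functions of pairwise disjoint blocks has VC dimension $1$ (two points in different blocks can never both be labeled $1$, and two points in the same block always receive equal labels), and more generally any class of only $d$ hypotheses has VC dimension at most $\log_2 d$, so no such class can witness $\List=\vc=d$ for $d\geq 2$. If you instead enrich the class (say, to indicators of unions of blocks, which is essentially the full cube on $d$ points, giving $\vc=d$ and small enough $\vc^\star$), then the required upper bound $\List(\H)\leq d$ is exactly the nontrivial content: a ``proper list-replicable ERM with careful tie-breaking on an ordered shattered set'' is an aspiration, not an argument, and nothing in your sketch rules out the list size being $d+1$ rather than $d$ for such a class. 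To close the corollary you should either cite the tight example from \citet{Chase23rep}, as the paper does, or exhibit a concrete class of VC dimension $d$ together with a verified $(\varepsilon,d)$-list replicable learner for every $\varepsilon>0$.
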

The sharpness of the $\vc$ dimension bound can be found in \cite{Chase23rep}, and of the $\vc^*$ bound is from \Cref{t:finite} below. 

Note that by going from \Cref{t:weaklower} to \Cref{cor:dualVC}, that is, from learners that learn at least something to those that learn arbitrarily well, the $\vc^*$ bound remains the same (and sharp), but the $\vc$ bound increases by a factor of~$2$.
It would thus be interesting to determine whether the $\vc$ bound for weak learners can be improved and whether there are bounds for intermediate values of $\varepsilon$. 
However, it is worth noting that the sharpness of $\vc(\H)$ bound on $\List(\H)$ means that for the weak learners it also cannot exceed $\vc(\H)$.

Another interesting question is whether a bound on $\List(\H)$ also holds in the converse direction for finite classes; that is, whether $\List(\H)$ can be upper bounded by a function of $\vc(\H),\vc^\star(\H)$, provided that $\H$ is finite.\footnote{The assumption that $\H$ is finite is necessary as witnessed by the class of one-dimensional thresholds $\H=\{1[x\leq t]: t\in \R\}$. Here we have $\vc(\H)=\vc^\star(\H)=1$ but $\List(\H)=\infty$ because $\ldim(\H)=\infty$ (by Theorem~\ref{t:gsld}). Note however that every finite subclass
$\H'\subseteq \H$ has $\List(\H')=2$, as proven by~\cite{Chase23rep}.}

\Cref{t:weaklower} can be utilized to answer the most basic question about the list-replicability numbers for finite classes: let $\H$ be a finite concept class of size $\lvert \H\rvert = m$;
clearly, $\H$ is learnable by a list-replicable algorithm whose list-size is $m=\lvert \H\rvert$. Can this be improved? Is it possible to learn with a sublinear list size $o(m)$?
The following theorem gives a negative answer.

\begin{customthm}{E}[Finite classes]\label{t:finite}
For every class $\H$  of size $m$ we have
$\List(\H)\leq 1 + \lfloor m/2\rfloor$. 
This is sharp, as witnessed by the class $\H_m=\{h_i : i\leq m\}$ 
of $m$ projection functions on $X=\{0,1\}^m$: $h_i(x)=x_i$ for every $x\in X$ and $i$.\footnote{Here we use the label set $Y=\{0,1\}$, instead of $\{\pm1\}$ used in the rest of the paper.} For $\H_m$, $\vc^*(\H_m) = m$, and so $\List(\H_m) = \List(\H_m, \varepsilon)= 1 + \lfloor m/2\rfloor$, for any $0<\varepsilon<1/2$.  
\end{customthm}

The theorem says that for all finite classes, it is possible to improve over the trivial list size of $\lvert \H\rvert$ by a factor of $\approx \tfrac{1}{2}$.
But it also says that in some cases this is the only improvement possible. 
In particular, since the Littlestone dimension of a finite class $\H$ satisfies $\ldim(\H) \leq \log\lvert \H\rvert$, we get:
\begin{cor}\label{cor:Ldim}
For every $d$, there exists a class $\H$ with $\ldim(\H)=d$ such that for every $\varepsilon<1/2$, $\List(\H, \varepsilon)\geq 2^{d-1}$. In particular, $\List(\H)\geq 2^{d-1}$.
\end{cor}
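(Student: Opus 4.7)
The plan is to take the witness class directly from Theorem~\ref{t:finite}: set $\H := \H_{2^d}$, the class of $2^d$ projection functions on $X = \{0,1\}^{2^d}$, and apply Theorem~\ref{t:finite} with $m = 2^d$. This immediately gives
\[
\List(\H, \varepsilon) \;=\; 1 + \lfloor 2^d / 2 \rfloor \;=\; 1 + 2^{d-1} \;\geq\; 2^{d-1}
\]
for every $0 < \varepsilon < 1/2$, and in particular $\List(\H) \geq 2^{d-1}$. Thus the entire content of the corollary reduces to verifying that this $\H$ has Littlestone dimension exactly $d$.

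For the upper bound $\ldim(\H) \leq d$, I would use the standard fact $\ldim(\G) \leq \log_2|\G|$ for any finite class $\G$: in a shattered depth-$k$ mistake tree each hypothesis is consistent with exactly one root-to-leaf path (the edge directions along that path are forced by the hypothesis' values on the internal-node instances), so realizing all $2^k$ paths requires $2^k$ distinct hypotheses, whence $2^k \leq |\G|$. Applied to $\H_{2^d}$ this yields $\ldim(\H_{2^d}) \leq d$. For the matching lower bound, I would build a shattered mistake tree of depth $d$ inductively while maintaining a ``surviving'' index set $S \subseteq [2^d]$: at each node, split $S$ into two halves $S_0 \sqcup S_1$ of size $|S|/2$ and label the node by the instance $x \in \{0,1\}^{2^d}$ with $x_i = 1$ iff $i \in S_1$. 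Then the projections $h_i$ with $i \in S_1$ (resp.\ $S_0$) output $1$ (resp.\ $0$) on $x$, so each edge prunes $S$ to a set of half the size; starting from $S = [2^d]$ and iterating $d$ times produces the desired tree.

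Essentially no obstacle arises: the corollary is a one-step consequence of Theorem~\ref{t:finite} once the Littlestone dimension of $\H_{2^d}$ is pinned down, and that computation is a textbook exercise. The only subtlety worth flagging is the parity choice $m = 2^d$, which is what makes the halving argument terminate after exactly $d$ steps and meet the $\log_2 m$ upper bound on the nose.
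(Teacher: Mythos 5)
Your proposal is correct and takes essentially the same route as the paper: the corollary is obtained by applying Theorem~\ref{t:finite} to the projection class $\H_{2^d}$ (so $\List(\H_{2^d},\varepsilon)=1+2^{d-1}$ for all $\varepsilon<1/2$) together with the bound $\ldim(\H)\le \log_2\lvert\H\rvert$. Your halving-tree argument pinning down $\ldim(\H_{2^d})\ge d$, which the paper leaves implicit, is also correct.
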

Thus, despite the fact that every Littlestone class is list-replicable learnable, there are cases where the list-replicability number is exponential in the Littlestone dimension.
It is worth noting that the best known upper bound is $\List(\H)\leq \exp(\exp(\ldim(\H)))$,
by~\cite{BunLM20}. Relatedly, \citet*{Ghazi21approx} provided an upper bound on the list-size 
which depends only exponentially on the Littlestone dimension. Namely, that for a class $\H$ with $\ldim(\H) = d$ and for a fixed $\varepsilon>0$, $\List(\H, \varepsilon) \leq O(d~2^{O(d^2)})$. However, the dependency on $\varepsilon$ in their bound cannot be easily removed to yield a bound on $\List(\H)$. Corollary~\ref{cor:Ldim} then implies that an exponential dependence of $\List(\H, \varepsilon)$ on the Littlestone dimension is necessary, even for large $\varepsilon<1/2$.


\section{Proof of \Cref{t:localLS}}

\subsection{Normal Spaces}\label{sec-normal-spaces}	

We start with a preliminary discussion on topological spaces. 
Let $X$ be a normal topological space;
that is, for every two disjoint closed sets $C_1, C_2$
there are two disjoint open sets $O_1, O_2$ so that
$C_1 \subset O_1$ and $C_2 \subset O_2$.

\begin{claim}
Every metric space is a normal topological space.
\end{claim}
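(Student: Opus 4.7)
The plan is to use the metric structure to construct, for any pair of disjoint closed sets $C_1, C_2 \subseteq X$, explicit disjoint open neighborhoods via distances to the sets. Recall that for a nonempty set $A \subseteq X$ and a point $x \in X$, the distance $d(x,A) = \inf_{a \in A} d(x,a)$ is a continuous function of $x$, and if $A$ is closed, then $d(x,A) = 0$ iff $x \in A$.

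The key step is the definition. For each $x \in C_1$, since $x \notin C_2$ and $C_2$ is closed, the number $r(x) := d(x, C_2)$ is strictly positive; similarly $s(y) := d(y, C_1) > 0$ for every $y \in C_2$. I would then set
\[
O_1 = \bigcup_{x \in C_1} B\bigl(x, r(x)/2\bigr), \qquad O_2 = \bigcup_{y \in C_2} B\bigl(y, s(y)/2\bigr),
\]
where $B(p,\rho)$ denotes the open ball of radius $\rho$ around $p$. Both $O_1$ and $O_2$ are unions of open balls, hence open, and they contain $C_1$ and $C_2$ respectively (each $x \in C_1$ lies in $B(x, r(x)/2) \subseteq O_1$, and likewise for $C_2$).

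The heart of the proof is verifying $O_1 \cap O_2 = \emptyset$. I would argue by contradiction: suppose some $z$ lies in both unions, so there exist $x \in C_1$, $y \in C_2$ with $d(x,z) < r(x)/2$ and $d(y,z) < s(y)/2$. Without loss of generality assume $r(x) \le s(y)$. By the triangle inequality,
\[
d(x,y) \le d(x,z) + d(z,y) < \tfrac{r(x)}{2} + \tfrac{s(y)}{2} \le s(y) = d(y, C_1).
\]
But $x \in C_1$ forces $d(x,y) \ge d(y, C_1)$, a contradiction. This establishes that the two open sets are disjoint, completing the proof.

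The main (and only real) obstacle is the algebraic juggling in the disjointness argument, where one must pick the right side of the symmetric inequality $r(x) \le s(y)$ to reach a contradiction; there is no deeper topological difficulty, since normality of metric spaces reduces entirely to the positivity of the distance function to a closed set and the triangle inequality.
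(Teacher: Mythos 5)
Your proof is correct and is exactly the argument the paper has in mind: the paper merely sketches "replace every point in $C_1$ and $C_2$ by a sufficiently small ball so the two families of balls are disjoint," and choosing radius $r(x)/2$ (resp.\ $s(y)/2$) with the triangle-inequality disjointness check is the standard way to make that precise. No discrepancy.
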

Indeed, this elementary claim follows by replacing every point in $C_1$ and $C_2$
by a sufficiently small ball such that the balls around points in $C_1$ are disjoint from the balls
around points in $C_2$.

The following stronger property is equivalent to normality.
The interior of $A$ is denoted by $A^\oo$.

\begin{claim}
\label{clm:obsNormal}
If $X$ is normal, then 
for every disjoint closed $C_1,C_2$,
there are disjoint closed $F_1,F_2$ so that
$C_1 \subset F_1^\oo$ and $C_2 \subset {F}_2^\oo$.
\end{claim}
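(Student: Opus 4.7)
The plan is to apply normality twice. Starting from disjoint closed $C_1, C_2$, the definition of normality directly produces disjoint open sets $O_1 \supset C_1$ and $O_2 \supset C_2$. This is not yet what we want because $O_1, O_2$ are open rather than closed with the $C_i$ in their interiors. The natural fix is to shrink each $O_i$ to a closed set $F_i$ that still thickens $C_i$, and to do so by invoking normality a second time on the pair of disjoint closed sets $C_i$ and $X \setminus O_i$.

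Concretely, after obtaining $O_1, O_2$ as above, I would observe that $C_1$ and $X \setminus O_1$ are disjoint closed sets (the latter is closed since $O_1$ is open, and disjointness holds because $C_1 \subset O_1$). A second application of normality yields disjoint open sets $U_1 \supset C_1$ and $V_1 \supset X \setminus O_1$. Set $F_1 := X \setminus V_1$; this is closed. Since $U_1 \cap V_1 = \emptyset$, we have $U_1 \subset F_1$, and since $U_1$ is open, $U_1 \subset F_1^\oo$, giving $C_1 \subset U_1 \subset F_1^\oo$. Moreover $F_1 = X \setminus V_1 \subset X \setminus (X \setminus O_1) = O_1$. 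Repeat the argument symmetrically to produce a closed $F_2$ with $C_2 \subset F_2^\oo$ and $F_2 \subset O_2$. Since $O_1 \cap O_2 = \emptyset$ and $F_i \subset O_i$, the sets $F_1, F_2$ are disjoint, as required.

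I do not anticipate any serious obstacle: the argument is essentially a two-step application of the definition together with the elementary fact that an open subset of a set is contained in the set's interior. The only small care needed is to verify at the end that the $F_i$ actually have the claimed three properties (closed, disjoint, and with $C_i$ in their interiors) simultaneously, which falls out automatically from the nested construction $C_i \subset U_i \subset F_i \subset O_i$.

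For completeness I would also remark that one could alternatively derive the statement from Urysohn's lemma by taking $F_1 = f^{-1}([0,1/3])$, $F_2 = f^{-1}([2/3,1])$ for a Urysohn function $f$ separating $C_1$ from $C_2$, but the two-step normality argument above is more elementary and self-contained.
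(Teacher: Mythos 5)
Your proof is correct and follows essentially the same route as the paper: apply normality once to get disjoint open $O_1,O_2$, then apply it a second time to each pair $C_i$, $X\setminus O_i$ to shrink to closed sets $F_i$ with $C_i\subset F_i^\oo\subset F_i\subset O_i$. The only (immaterial) difference is that you take $F_i=X\setminus V_i$ while the paper takes the closure of the open set around $C_i$; both choices work identically.
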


The claim above follows from the following claim
(applied twice to $C_1 \subset O_1$ and $C_2 \subset O_2$).

\begin{claim}
If $C \subset O$ for $C$ closed and $O$ open
then there is $F$ closed so that
$C \subset F^\oo \subset F \subset O$.
\end{claim}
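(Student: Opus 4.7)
The plan is to use the definition of normality directly on the pair of disjoint closed sets $C$ and $O^c$. Since $C \subset O$, the complement $O^c$ is closed and disjoint from $C$. Normality then yields disjoint open sets $U_1, U_2 \subset X$ with $C \subset U_1$ and $O^c \subset U_2$.

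I would then define $F := X \setminus U_2$ and check that this $F$ does the job. First, $F$ is closed as the complement of an open set. Second, from $O^c \subset U_2$ we get $F = X \setminus U_2 \subset X \setminus O^c = O$, so $F \subset O$. Third, from $U_1 \cap U_2 = \emptyset$ we get $U_1 \subset X \setminus U_2 = F$, and because $U_1$ is open this upgrades to $U_1 \subset F^\oo$. Combining with $C \subset U_1$ gives the chain
\[
C \;\subset\; U_1 \;\subset\; F^\oo \;\subset\; F \;\subset\; O,
\]
which is exactly what is required.

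There is essentially no obstacle here; this is the standard "closed set can be thickened inside an open set" consequence of normality, and it is the same trick that underlies Urysohn's lemma. The only thing to be careful about is not to confuse the roles of the two open sets produced by normality: $U_1$ is the neighborhood of $C$, while $U_2$ is the neighborhood of $O^c$, and it is $X \setminus U_2$ (not $U_1$ itself, nor its closure) that serves as the desired closed set $F$. The mild subtlety, if any, is recognizing that $U_1 \subset F$ automatically lifts to $U_1 \subset F^\oo$ purely because $U_1$ is open; no further appeal to normality is needed for the interior condition.
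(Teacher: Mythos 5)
Your proof is correct and follows essentially the same route as the paper: apply normality to the disjoint closed sets $C$ and $X\setminus O$ and use the disjointness of the resulting open sets to sandwich a closed set between $C$ and $O$. The only (immaterial) difference is the choice of $F$ --- you take $F = X\setminus U_2$, the complement of the open neighborhood of $O^c$, while the paper takes $F$ to be the closure of the open neighborhood of $C$; both verifications are one line and interchangeable.
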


\begin{proof}
The sets $C$ and $X \setminus O$ are disjoint and closed.
Let $B,B'$ be disjoint open sets so that $C \subset B$
and $X \setminus O \subset B'$.
Let $F$ be the closure of $B$.
Because $X \setminus B'$ is closed and contains $B$,
we can deduce that $F \subset X \setminus B'$.
Because $X \setminus O \subset B'$,
we know that $X \setminus B' \subset O$.
Because $B$ is open,
we know that $C \subset B \subset F^\oo$.
\end{proof}

For an integer $m \geq 2$, a topological space is $m$-normal if
for every collection of closed sets
$C_1,\ldots,C_m$,
there is a collection of closed sets 
$F_1,\ldots,F_m$ so that for all $i \in [m]$,
$$C_i \subset F^\oo_i$$
and for every $S \subset [m]$
$$\bigcap_{i \in S} F_i \neq \emptyset
\ \Rightarrow \ 
\bigcap_{i \in S} C_i \neq \emptyset.$$
A topological space is $\infty$-normal if it is $m$-normal for every integer $m \geq 2$. 
Note that every $m$-normal space (for $m \geq 2$) is normal and being $2$-normal is equivalent to being normal. 
The following lemma shows that being normal is in fact equivalent to being $\infty$-normal.

\begin{lem}
\label{lem:normal}
Every normal space is $\infty$-normal.
\end{lem}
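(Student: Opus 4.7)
The plan is to first establish a \emph{separation lemma}---that in a normal space, whenever closed sets $C_1,\dots,C_k$ have empty intersection one can find open neighborhoods $U_i\supseteq C_i$ with $\bigcap_{i=1}^k U_i=\emptyset$---and then combine it with the sandwich claim proved just above (if $C\subseteq O$ with $C$ closed and $O$ open, there is closed $F$ with $C\subseteq F^\oo\subseteq F\subseteq O$) to produce the sets $F_i$ demanded by the lemma.

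To prove the separation lemma, I would induct on $k$. The base $k=2$ is the definition of normality. For $k\geq 3$, apply normality to the disjoint closed sets $C_1$ and $C_2\cap\cdots\cap C_k$ to obtain disjoint open sets $U_1\supseteq C_1$ and $V\supseteq C_2\cap\cdots\cap C_k$. Since $C_2\cap\cdots\cap C_k\subseteq V$, the $k-1$ closed sets $C_2\cap(X\setminus V),\ C_3,\dots,C_k$ still have empty intersection, so by the inductive hypothesis there exist open sets $W_2\supseteq C_2\cap(X\setminus V)$ and $W_i\supseteq C_i$ for $i\geq 3$ with $\bigcap_{i=2}^k W_i=\emptyset$. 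Setting $U_2:=W_2\cup V$ and $U_i:=W_i$ for $i\geq 3$, each $C_i$ is contained in $U_i$, and $\bigcap_{i=1}^k U_i$ decomposes as the union of $U_1\cap W_2\cap W_3\cap\cdots\cap W_k$ and $U_1\cap V\cap W_3\cap\cdots\cap W_k$; the first is empty since $\bigcap_{i=2}^k W_i=\emptyset$, and the second is empty since $U_1\cap V=\emptyset$.

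For the main lemma, given closed sets $C_1,\dots,C_m$, let $\mathcal{B}=\{S\subseteq [m]:\bigcap_{i\in S}C_i=\emptyset\}$ be the family of ``bad'' index sets. For each $S\in\mathcal{B}$, apply the separation lemma to $(C_i)_{i\in S}$ to obtain open sets $U_i^S\supseteq C_i$ (for $i\in S$) with $\bigcap_{i\in S}U_i^S=\emptyset$. Because $\mathcal{B}$ is finite, the set $O_i:=\bigcap\{U_i^S:S\in\mathcal{B},\,i\in S\}$ is a finite intersection of open neighborhoods of $C_i$, hence open with $C_i\subseteq O_i$, and it satisfies $\bigcap_{i\in S}O_i\subseteq\bigcap_{i\in S}U_i^S=\emptyset$ for every $S\in\mathcal{B}$. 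Finally, applying the sandwich claim to each pair $C_i\subseteq O_i$ yields closed $F_i$ with $C_i\subseteq F_i^\oo\subseteq F_i\subseteq O_i$. These $F_i$'s meet both requirements of $\infty$-normality: $C_i\subseteq F_i^\oo$ by construction, and for every $S\in\mathcal{B}$, $\bigcap_{i\in S}F_i\subseteq\bigcap_{i\in S}O_i=\emptyset$, which is the contrapositive of the implication in the definition.

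The main obstacle is the inductive step of the separation lemma, specifically the maneuver of replacing $C_2$ by $C_2\cap(X\setminus V)$ to reduce to $k-1$ closed sets with empty intersection while still recovering a neighborhood of $C_2$ by re-adjoining $V$ into $U_2=W_2\cup V$. Once the separation lemma is in hand, the rest of the argument is routine bookkeeping with finite intersections of open sets and a final application of the sandwich claim to produce closed sets that trap the $C_i$ in their interiors while respecting all emptiness conditions simultaneously.
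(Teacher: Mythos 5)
Your proof is correct, but it is organized differently from the paper's. The paper runs a single induction that fattens the sets one at a time: having already replaced $C_1,\ldots,C_j$ by $F_1,\ldots,F_j$, it builds $F_{j+1}$ as an intersection of closed neighborhoods $H_S$ of $C_{j+1}$, one for each subset $S$ of the remaining (mixed fattened/unfattened) sets whose intersection misses $C_{j+1}$, and carries the ``no new intersections'' property as an invariant of that induction. You instead isolate a symmetric separation lemma --- closed sets with empty total intersection admit open neighborhoods with empty intersection --- proved by its own induction (splitting $C_1$ off against $\bigcap_{i\ge 2}C_i$ and patching via $U_2=W_2\cup V$), then handle all bad index sets $S$ at once by intersecting the finitely many $U_i^S$ to get open $O_i\supseteq C_i$ killing every bad intersection, and finish with the same sandwich claim the paper proves before the lemma. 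Your decomposition cleanly separates the combinatorial bookkeeping (a finite intersection over bad subsets) from the topology (separation plus sandwich), and produces all the $F_i$ in one shot; the paper's sequential construction avoids stating the $k$-set separation lemma separately by folding the same mechanism into one induction. The only loose ends in your write-up are trivial conventions: the separation lemma for $k=1$ (take $U_1=\emptyset$), and the case where some index $i$ lies in no bad $S$, where one should set $O_i=X$; neither affects correctness.
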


\begin{proof}
Let $C_1,\ldots,C_m$ be closed sets.
It will be convenient to set $C_0 = \emptyset$.
We claim, by induction, that for each $j \in [m]$,
there are closed sets $F_0,F_1,\ldots,F_j$ so that
$C_i \subset F_i^\oo$ for all $i \in \{0,1,\ldots,m\}$
and so that if the intersection of a subset of $F_0,F_1,\ldots,F_j,C_{j+1},\ldots,C_m$
is nonempty then the corresponding intersection of
$C_0,C_1,\ldots,C_m$ is also nonempty.
Applying this claim for $j=m$ completes the proof.

The base case $j=0$ is trivial with $F_0 = C_0$.
For the step, let $0<j<m$ and
assume we already chose $F_1,\ldots,F_j$.
Let $A_i = F_i$ for $i \leq j$,
and let $A_i = C_i$ for $i>j+1$.
For every non empty $S \subset [m] \setminus \{j+1\}$
so that $\bigcap_{i \in S} A_i \neq \emptyset$
and $C_{j+1} \cap \bigcap_{i \in S} A_i  = \emptyset$,
let $H_S$ be a closed set so that $C_{j+1} \subset H_S^\oo$
that is disjoint from $\bigcap_{i \in S} A_i$.
When $\bigcap_{i \in S} A_i = \emptyset$,
set $H_S = X$.
So,
\begin{align*}
 H_S \cap \bigcap_{i \in S} A_i \neq \emptyset \ \
\Rightarrow \ \  C_{j+1} \cap \bigcap_{i \in S} A_i \neq \emptyset .
\end{align*}
Define the closed set $F_{j+1}$ to be
$$F_{j+1} = \bigcap_S H_S.$$
Because $C_{j+1} \subset H_S^\oo$ for all $S$, we have $C_{j+1} \subset F_{j+1}^\oo$. 
To complete the proof, 
assume $T \subset \{0,\ldots,m\}$ is so that $j+1 \in T$ and
$\bigcap_{i \in T} A_i \neq \emptyset$,
where $A_{j+1} = F_{j+1}$. 
Because $F_{j+1} \subset H_{T \setminus  \{j+1\}}$,
\begin{align*}
H_{T \setminus  \{j+1\}} \cap \bigcap_{i \in T \setminus \{j+1\}} A_i \neq \emptyset 
\ \Rightarrow \  C_{j+1} \cap \bigcap_{i \in T \setminus \{j+1\}} A_i \neq \emptyset .
\end{align*}
By the induction hypothesis, we can conclude
\begin{align*}
C_{j+1} \cap \bigcap_{i \in T \setminus \{j+1\}} A_i \neq \emptyset
\ \Rightarrow \  \bigcap_{i \in T} C_i \neq \emptyset .
\end{align*}
Altogether,
\begin{equation*}
\bigcap_{i \in T} A_i \neq \emptyset
\ \Rightarrow \  \bigcap_{i \in T} C_i \neq \emptyset . \qedhere
\end{equation*}

\end{proof}

\subsection{The Lower Bounds}

The proof relies on the following theorem by \citet*{jan2000periodic}.
\begin{theorem}
\label{thm:Jaw}[\cite{jan2000periodic}]
Let $P$ be a finite 
simplicial complex of dimension $d$
that is realized in Euclidean space.
If $f : S^n \to P$ is continuous and $2d \leq n$ 
then there is $x \in S^n$ so that
$f(x) = f(-x)$. 
\end{theorem}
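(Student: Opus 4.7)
The plan is to reformulate the coincidence problem in $\mathbb{Z}/2$-equivariant terms and then rule out the equivariant map one obtains under the assumption of no coincidence. Consider the auxiliary map $F\colon S^n\to P\times P$ given by $F(x)=(f(x),f(-x))$. With the antipodal action on $S^n$ and the coordinate-swap action on $P\times P$, the map $F$ is $\mathbb{Z}/2$-equivariant, and the fixed set of the swap action is precisely the diagonal $\Delta_P=\{(p,p):p\in P\}$. A coincidence $f(x)=f(-x)$ is the statement $F(x)\in\Delta_P$, so I would proceed by contradiction: assume $F$ avoids $\Delta_P$, whence $F$ factors through the free $\mathbb{Z}/2$-space $Y:=(P\times P)\setminus\Delta_P$.

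The strategy is to produce a continuous $\mathbb{Z}/2$-equivariant map
\[
\Phi\colon Y\to S^{2d-1},
\]
with $S^{2d-1}$ under the antipodal action, since then $\Phi\circ F\colon S^n\to S^{2d-1}$ is antipodal-equivariant and the classical Borsuk--Ulam theorem forbids such a map for $n>2d-1$, i.e.\ under the hypothesis $2d\le n$. I would build $\Phi$ inductively over the skeleta of a $\mathbb{Z}/2$-equivariant triangulation of $Y$, obtained by taking a barycentric subdivision of $P\times P$ fine enough that the swap is simplicial and $\Delta_P$ is a full subcomplex, and then deleting the diagonal cells. The extension obstructions live in equivariant cohomology with coefficients in $\pi_\ast(S^{2d-1})$; because $S^{2d-1}$ is $(2d-2)$-connected, all obstructions below the top equivariant degree vanish automatically, and only the obstruction in equivariant degree $2d$ remains to be killed.

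The main obstacle is exactly this top-degree obstruction, which is where the simplicial dimension of $P$ must be exploited in an essential way. A naive attempt using a Menger--N\"obeling realization $P\subseteq\R^{2d+1}$ together with the normalized difference $(x,y)\mapsto(x-y)/\|x-y\|$ yields only an equivariant map $Y\to S^{2d}$, which through Borsuk--Ulam gives the weaker bound $n\ge 2d+1$, one short of the sharp conclusion. Closing this one-dimensional gap must leverage the fact that each top cell of $Y$ is a product of two $d$-simplices of $P$ (rather than just the topological dimension of $P$), and the combinatorial/homological analysis required to show that the resulting top obstruction class vanishes for every $d$-dimensional simplicial complex $P$ is the non-trivial heart of the argument.
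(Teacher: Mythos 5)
The paper does not actually prove this statement: it is quoted as an external result of \citet{jan2000periodic}, and the only argument sketched in the paper is for the weaker hypothesis $2d+1\le n$, via Menger's realization theorem ($P$ embeds in $\R^{2d+1}$) plus the classical Borsuk--Ulam theorem. That weaker argument is exactly what your ``naive attempt'' paragraph reproduces, so everything hinges on your proposed route to the sharp bound $2d\le n$ --- and that route is both incomplete and, as stated, unworkable. Incomplete, because you explicitly leave the top-degree obstruction unanalyzed and call it ``the non-trivial heart of the argument''; that is the entire content of the theorem beyond the easy bound, so the proposal is a plan, not a proof.

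More seriously, the plan itself cannot succeed: a $\mathbb{Z}/2$-equivariant map $\Phi\colon (P\times P)\setminus\Delta_P\to S^{2d-1}$ does \emph{not} exist for every finite $d$-dimensional complex $P$. The existence of such a map is precisely the deleted-product (van Kampen) criterion associated with embeddability of $P$ in $\R^{2d}$, and it is classical (van Kampen--Flores) that it fails, for example, for the $d$-skeleton of the $(2d+2)$-dimensional simplex: the $\mathbb{Z}/2$-index of its deleted product equals $2d$, so the top obstruction you hope to kill is genuinely nonzero for such $P$. Since Jaworowski's theorem applies to these complexes as well, any correct proof of the sharp case cannot factor an arbitrary equivariant map $S^n\to (P\times P)\setminus\Delta_P$ through $S^{2d-1}$; it must exploit the special structure of the particular map $F(x)=(f(x),f(-x))$ --- equivalently, of the odd map $x\mapsto f(x)-f(-x)$ and its zero set --- via Smith-theoretic or Yang-type index arguments for coincidence sets, which is the kind of argument the cited reference uses. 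In short: your reduction proves only the bound the paper already notes follows from Menger plus Borsuk--Ulam, and the step you defer is not merely hard but false in the generality you need.
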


The proof of Theorem~\ref{thm:Jaw} under the weaker assumption that $2d+1 \leq n$
follows from the following two known results.
The first is the realization theorem, first proved by Menger, stating that $P$ can be realized in $\R^{2d+1}$.
The second is the Borsuk-Ulam theorem stating that 
every continuous map from $S^n$ to $\R^n$ must collapse antipodal points. 
We partition the proof into two cases.

\begin{proof}[Lower bound in open case]
Assume that $A_1,\ldots,A_k$ are open subsets of $S^n$
that are antipodal-free and cover the sphere.
Define a continuous map $\tilde g : S^n \to \R^k$ as follows: for each $x \in S^n$
and $i \in [k]$,
$$\tilde g_i(x) = \dist(x, S^n \setminus A_i),$$
where $\dist(x,A) = \inf \{ \|x - y\|_2 : y \in A\}$ for $A \subset S^n$.
Define
$$g(x) = \frac{\tilde g(x)}{\|\tilde g(x)\|_1}.$$
Let $\ell = \max \{ L(x, \{A_1,\ldots,A_k\}) : x \in S^n\}$.
Each $g(x)$ has sparsity at most $\ell$ and is 
in a co-dimension one (affine) subspace.
Hence,
the image of $g$ is contained in an $(\ell-1)$-dimensional polyhedron. 
In addition, we claim that for all $x \in S^n$,
we have $g(x) \neq g(-x)$.
Indeed, for each $x$, let $i \in [k]$ be so that $x \in A_i$ and $-x \not \in A_i$.
Because $A_i$ is open,
$\tilde g_i(x) >0$ and $\tilde g_i(-x) = 0$.
So, $g(x) \neq g(-x)$. 
Theorem~\ref{thm:Jaw}, therefore, implies that
$2(\ell-1) \geq n+1$.
\end{proof}

\begin{proof}[Lower bound in open/closed case]
The proof is by reduction to the open case.
Assume that $A_1,\ldots,A_k$ are open/closed antipodal-free sets that cover $S^n$,
where not all are open. 
Let $C \subset [k]$ be the set of $i \in [k]$ so that $A_i$ is closed. 
Let $\{F_i : i \in C\}$ be the collection of closed sets given by Lemma~\ref{lem:normal}.
For each $i \in C$, let $O_i$ be an open set so that $A_i \subset O_i$
and $O_i$ is antipodal-free.
For each $i \in C$, consider the open set
$$A'_i = F_i^\oo \cap O_i.$$
For $i \not \in C$, let $A'_i = A_i$.
We obtained a new open antipodal-free cover of the sphere
$A'_1,\ldots,A'_k$.
It remains to control the cover numbers. 
Let $x$ be so that $L(x,\{A'_1,\ldots,A'_k\}) \geq L_0(n)$.
Let $C'_x = \{i \in C : x \in A'_i\}$.
If $\bigcap_{i \in C'_x} A'_i = \emptyset$ then we are done.
Otherwise, by choice of $\{F_i : i \in C\}$,
$$\bigcap_{i \in C'_x} A'_i \neq \emptyset
\ \Rightarrow \ \bigcap_{i \in C'_x} F_i \neq \emptyset
\ \Rightarrow \ \bigcap_{i \in C'_x} A_i \neq \emptyset.$$
Let $y \in \bigcap_{i \in C'_x} A_i$.
So,
$$L_0(n) = L(x,\{A'_1,\ldots,A'_k\}) \leq 
L(x,\{A_i : i \not \in C \}) + 
L(y,\{A_1,\ldots,A_k\}).$$
When all sets $A_i$ are closed,
the term $L(x,\{A_i : i \not \in C \})$ is zero.
\end{proof}

\subsection{Upper Bound}

\Cref{t:localLS} is stated for the $d$-dimensional sphere $\S^d$ with the involution $x \mapsto -x$. However, as stated in \Cref{rem-topological-sphere}, we can consider any topological space homeomorphic to $\S^d$ with an arbitrary fixed-point free involution $\nu$. For this purpose, it will be convenient to consider the barycentric subdivision of the boundary of a $(d+1)$-dimensional simplex.

\begin{definition}[Barycentric subdivision of the boundary of a simplex]\label{def-barycentric}
The barycentric subdivision $\B^d$ of the boundary of the $(d+1)$-dimensional simplex $\Delta_{d+1}$ is defined as follows.
The vertices of $\Delta_{d+1}$ are the elements of $[d+2]$.
The vertices of $\B^d$ are all the non-trivial subsets $T$ of $[d+2]$, that is, all subsets except for $\emptyset$ and $[d+2]$.
The simplices of $\B^d$ are chains $\sigma = \{T_1 \subsetneq T_2 \subsetneq \ldots \subsetneq T_t\}$ of nontrivial subsets of $[d+2]$.

We have the following geometric implementation in mind.
There are $d+2$ points in general position in $\R^{d+1}$ that are called $1,\ldots,d+2$.
Each non-trivial subset $T$ of $[d+2]$ is the average
of the points in $T$.
A point $x$ in a simplex $\sigma = \{T_1 \subsetneq \ldots \subsetneq T_t\}$ of $\B^d$
can be uniquely expressed as a convex combination $x = \sum_i \lambda_{T_i} T_i$.
For each $T$, we denote by $\lambda_T(x)$
the coefficient of $T$ in the expression above, 
if $T \not \in \sigma$ then $\lambda_T(x)=0$.

Antipodality is defined via a continuous involution $\nu$.
The involution is defined on the vertices by $\nu(T) = [d+2] \setminus T$.
It is defined over all of $\B^d$ by linear interpolation. 
That is, for all $x$ and $T$, we have
$\lambda_T(\nu(x)) = \lambda_{\nu(T)}(x)$.

Thus defined, $\B^d$ is homeomorphic to $\S^d$, and $\nu$ is a continuous fixed-point free involution over it. The concept is illustrated in \Cref{fig-BARYCENTRIC} below.
\end{definition}

\begin{remark}
The involution $\nu$ of $\B^d$ is equivalent to the standard antipodality map $x \mapsto -x$ of $\S^d$, in the sense that the former is obtained from the latter by conjugating it by a homeomorphism from $\B^d$ to $\S^d$. We note that, in general, topological spheres admit fixed-point free involutions that are not standard~\citep{hirsch64}. Although we will not use those, \Cref{t:localLS} is still applicable in this situation.
\end{remark}

\begin{figure}[!hbt]
	\centering
	\begin{tikzpicture} 
[
pt/.style={inner sep = 0.0pt, circle, draw, fill=black},
point/.style={inner sep = 1.7pt, circle,draw,fill=white},
spoint/.style={inner sep = 1.2pt, circle,draw,fill=white},
mpoint/.style={inner sep = 1.7pt, circle,draw,fill=black},
ypoint/.style={inner sep = 3pt, circle,draw,fill=yellow},
xpoint/.style={inner sep = 3pt, circle,draw,fill=red},
FIT/.style args = {#1}{rounded rectangle, draw,  fit=#1, rotate fit=45, yscale=0.5},
FITR/.style args = {#1}{rounded rectangle, draw,  fit=#1, rotate fit=-45, yscale=0.5},
FIT1/.style args = {#1}{rounded rectangle, draw,  fit=#1, rotate fit=45, scale=2},
vecArrow/.style={
		thick, decoration={markings,mark=at position
		   1 with {\arrow[thick]{open triangle 60}}},
		   double distance=1.4pt, shorten >= 5.5pt,
		   preaction = {decorate},
		   postaction = {draw,line width=0.4pt, white,shorten >= 4.5pt}
	}
]

\begin{scope}[yscale=1.2, xscale = 0.7]
	\begin{scope}[yscale = 0.7]
		\node(1) at (0,2) {$1$};
		\node(2) at (2,-1) {$2$};
		\node(3) at (-2,-1) {$3$};
	
		\node(12) at (2,1) {$12$};
		\node(23) at (0,-2) {$23$};
		\node(13) at (-2,1) {$13$};

		\draw (1)--(12)--(2)--(23)--(3)--(13)--(1);
	\end{scope}
\end{scope}

\end{tikzpicture}	
	\caption{The simplicial complex $\B^1$
 is the barycentric subdivision of the boundary of a triangle (with vertices $1,2,3$). It is homeomorphic to the sphere $\S^1$. The simplices are chains (like $1 \subsetneq 12$ and $3 \subsetneq 23$). The involution $\nu$ swaps the vertices $1$ and $23$, the vertices $2$ and $13$, and the vertices $3$ and $12$. The involution in this case is rotation by $180^o$ around the center. }
	\label{fig-BARYCENTRIC}       
\end{figure}

\begin{proof}[Upper bound in closed case]
We need to define the closed cover.
Let $t = \lceil (d+3)/2 \rceil$.
For each $i \in [d+2]$, we will have a closed set $A_i$,
and there is one additional set $A_+$.
Define the following weight function of subsets $T$ of $[d+2]$:
$$w(T) = \begin{cases}
1 & |T| < t ,\\
1/2 & |T| \geq t .
\end{cases}$$
For a point $x$, define
$$h(x) = \max \{ w(T) \lambda_{T}(x) : T \} >0.$$
For $T$ of size $|T|< t$, defined the closed set
$$F_T = \{x \in \B^d  : h(x) = w(T) \lambda_T(x) \}.$$
For $i \in [d+2]$,
define
$$A_i = \bigcup_{T: i \in T , |T| < t} A_T.$$
The closed set $A_+$ is
$$A_+ = \{x \in \B^d  : \exists T \ |T|\geq t  \ \wedge  h(x) = w(T) \lambda_T(x) \}.$$
All sets thus defined are closed, and they form a cover of the sphere. 
Each $x$ belongs to at most $t = t-1+1$ sets,
because the support $\sigma = \{T_1 \subsetneq \ldots \subsetneq T_t\}$ of $x$ is a chain
and $x \in A_i$ only when $i \in T_j$ of size $|T_j|<t$ for some $j$.
It remains to verify the antipodal-free condition. 

If both $x$ and $\nu(x)$ are in $A_i$,
then there are subsets $T, T'$ of size at most $t$
such that $i\in T, T'$, $\lambda_T(x) >0$,
and $\lambda_{T'}(\nu(x)) = \lambda_{\nu(T')}(x) > 0$.
Because simplices are chains, 
it follows that $T \subsetneq \nu(T')$.
We get a contradiction: $i \in \nu(T')$ implies $i \not \in T'$.

If both $x$ and $\nu(x)$ are in $A_+$ then
there are $T_1$ and $T_2$ of sizes
$|T_1| \geq t$ and $|\nu(T_2)| \geq t$ so that
$$h(x) = w(T_1) \lambda_{T_1}(x) = (1/2) \lambda_{T_1}(x) \geq 
w(T_2) \lambda_{T_2}(x) = \lambda_{T_2}(x) $$
and
$$h(\nu(x)) = w(\nu(T_2)) \lambda_{\nu(T_2)}(\nu(x)) = (1/2) \lambda_{T_2}(x)
\geq  w(\nu(T_1)) \lambda_{\nu(T_1)}(\nu(x))
=  \lambda_{T_1}(x).$$
Again, we get a contradiction:
$$\lambda_{T_1}(x) \geq 
2 \lambda_{T_2}(x) \geq 4 \lambda_{T_1}(x).$$
\end{proof}


\begin{remark}
A construction of open sets can be obtained
from the closed sets above using Lemma~\ref{lem:normal}.
\end{remark}

\begin{proof}[Upper bound for open/closed case]
For this construction we need to perform a second barycentric subdivision.
Let $\Q^d$ be the barycentric subdivision of $\B^d$.
The vertices of $\Q^d$ are increasing chains of non-trivial subsets of $[d+2]$.
A typical vertex in $\Q^d$ is of the form $v = \{T_1 \subsetneq \ldots \subsetneq T_r\}$.
The geometric realization of $v$ is as
$$v =  \sum_q \frac{c}{w(T_q)} T_q,$$
where $w(\cdot)$ was defined above
and $c$ is the constant so that $c \sum_q \frac{1}{w(T_q)} = 1$.
The simplices of $\Q^d$ are chains of chains.
A typical simplex is of the form $\sigma = \{v_1 \subsetneq \ldots \subsetneq v_m\}$. This construction is illustrated in \Cref{fig-DOUBLE-BARYCENTRIC} below.

\begin{figure}[!hbt]
	\centering
	\begin{tikzpicture} 
[
pt/.style={inner sep = 0.0pt, circle, draw, fill=black},
point/.style={inner sep = 1.7pt, circle,draw,fill=white},
spoint/.style={inner sep = 1.2pt, circle,draw,fill=white},
mpoint/.style={inner sep = 1.7pt, circle,draw,fill=black},
ypoint/.style={inner sep = 3pt, circle,draw,fill=yellow},
xpoint/.style={inner sep = 3pt, circle,draw,fill=red},
FIT/.style args = {#1}{rounded rectangle, draw,  fit=#1, rotate fit=45, yscale=0.5},
FITR/.style args = {#1}{rounded rectangle, draw,  fit=#1, rotate fit=-45, yscale=0.5},
FIT1/.style args = {#1}{rounded rectangle, draw,  fit=#1, rotate fit=45, scale=2},
vecArrow/.style={
		thick, decoration={markings,mark=at position
		   1 with {\arrow[thick]{open triangle 60}}},
		   double distance=1.4pt, shorten >= 5.5pt,
		   preaction = {decorate},
		   postaction = {draw,line width=0.4pt, white,shorten >= 4.5pt}
	}
]

\begin{scope}[yscale=1.2, xscale = 1.0]
	\begin{scope}[yscale = 0.7]
		\node(a) at (-4,0) {$(1)$};
		\node(b) at (0,4) {$(12)$};
		\node(c) at (4,0) {$(123)$};
		\node[mpoint](ac) at (0,0) {};
		\node[mpoint](ab) at (-2,2) {};
		\node[mpoint](bc) at (2,2) {};
		\node[mpoint](abc) at (0,1.5) {};
	
		\draw [dotted] (a)--(ab)--(b)--(bc)--(c)--(ac)--(a);
		\draw (ab)--(abc) (bc)--(abc);
		\draw [dotted] (b)--(abc)  (c)--(abc);
		
		\draw [dotted] (a)--(abc) node[midway,sloped,above] {\scriptsize$(1), (1, 12, 123)$};
		\draw (ac)--(abc)  node[midway,right] {\tiny$\substack{(1, 123), \\(1, 12, 123)}$};
		\node at (-1, 0.5) {\tiny$\substack{(1), (1, 123), \\(1, 12, 123)}$};
		
		\node [below] at (ac) {$(1, 123)$};
		\node [left] at (ab) {$(1, 12)$};
		\node [right] at (bc) {$(12, 123)$};
		\node [above] at (abc) {$(1, 12, 123)$};

		\node [below] at (-2, 0) {\scriptsize$(1), (1, 123)$};
	\end{scope}
\end{scope}

\end{tikzpicture}	
	\caption{A simplex in the second barycentric subdivision $\Q^d$. An illustration of the barycentric subdivision of a $2$-dimensional simplex in $\B^d$. The simplex is spanned by the chain $1 \subsetneq 12 \subsetneq 123$.
 The single $2$-dimensional simplex is partitioned to six new $2$-dimensional simplexes. 
 The new simplexes are chains of chains.}
	\label{fig-DOUBLE-BARYCENTRIC}       
\end{figure}

The coloring of $\B^d$ defined by the closed sets $F_T$ and $A_+$ 
is simplicial with respect to $\Q^d$; in particular, 
\begin{center}
{\em the simplex $\sigma = \{v_1 \subsetneq \ldots \subsetneq v_m\}$
is contained in $F_T$ if and only if $T \in v_1$.}
\end{center}
Indeed, express $x$ in 
$\sigma = \{v_1 \subsetneq \ldots \subsetneq v_m\}$
as
$$x = \sum_j \beta_j v_j =  \sum_j \beta_j \sum_q \frac{c_j}{w(T_{j,q})} T_{j,q}
= \sum_i \frac{1}{w(T_i)} \Big(\sum_{j : T_i \in v_j} c_j \beta_j\Big)  T_i .$$ 
This means that
$w(T) \lambda_T(x) = h(x)$ iff $T \in v_1$
because $v_1 \subsetneq v_j$ and $c_j,\beta_j >0$
for all $j$.

We now change some of the closed sets $F_T$ to open sets.
Let, as before, $t = \lceil (d+3)/2 \rceil$, and $s = \lceil t /2\rceil$.
Keep all sets $F_T$ for $|T| > s$ without change.
Change all sets $F_T$ for $|T| \leq s$ to be open as follows.
A vertex $v = \{T_1 \subsetneq \ldots \subsetneq T_r\}$ is called {\em low}
if $|T_r| \leq s$.
A simplex $\sigma = \{v_1 \subsetneq \ldots \subsetneq v_m\}$
is {\em low} if $v_1$ is low.
Let $F'_T$ be the set obtained from $F_T$ by
first removing all simplices that are not low,
and then adding the interiors of all low simplices $\sigma$
that contain a low vertex $v \in \sigma$ so that $T \in v$.

We claim that the new sets cover the sphere,
and that the cover number is at most $s$.
Let $\sigma = \{v_1 \subsetneq \ldots \subsetneq v_m\}$ be the support of the point $x$.
There are two cases to consider:

\begin{enumerate}

\item If $\sigma$ is not low, then $x$ is in one of the closed sets $F_T$
for $T \in v_1$, and $x$ does not belong to any open set.
The sets $F_T$ that cover $x$ are in the chain $v_1$
and have size $s < |T| < t$. There is also potentially the set $A_+$ that contains $x$.
The cover number of $x$ is at most $t - s \leq s$.

\item If $\sigma$ is low, then $x$ is covered only by open sets.
Let $v_r$ be the maximal low vertex in $\sigma$.
The point $x$ is in the interior of $\sigma$.
The point $x$ is thus colored by $F'_T$ for $T \in v_r$.
The number of such $T$'s is at most $s$ because $v_r$ is low.
\end{enumerate}

We now claim that the sets $F'_T$ for $|T|\leq s$ are open.
If $x$ is in $F'_T$ then its support $\sigma = \{v_1 \subsetneq \ldots \subsetneq v_m\}$
is low and $T \in v_j$ for some $j$.
Because $x$ is in the interior of $\sigma$,
if $U$ is a small open neighborhood of $x$
then for every $\sigma'$ so that $U \cap \sigma' \neq \emptyset$,
we have $\sigma' \supseteq \sigma$.
This means that $\sigma'$ is low
and contains $v_j$ so that the interior of $\sigma'$ is also in $F'_T$.

It remains to verify that the set $F'_T$ is antipodal-free.
If $x$ is in $F'_T$ then its support $\sigma = \{v_1 \subsetneq \ldots \subsetneq v_m\}$
is low and $T \in v_j$ for some $j$.
Write $x$ as 
$$x = \sum_i \frac{1}{w(T_i)} \Big(\sum_{j : T_i \in v_j} c_j \beta_j\Big)  T_i.$$ 
It follows that $T$ is in the support of $x$ and
the set $\nu(T)$ is incomparable to $T$.
Because $v$ is a chain, we can conclude that $\nu(x) \not \in F_T$.

\end{proof}

\section{Proof of \Cref{t:CC}}

For accessibility, we restate the theorem that we are going to prove.

\begin{customthm}{B}[Colorful chains]
Given a positive integer $n$ and a Kneser coloring of $\P([n])$, there exists a chain of subsets that receives at least $\lfloor n/2 \rfloor + 1$ distinct colors. This bound is sharp; there are Kneser colorings that assign no more than $\lfloor n/2 \rfloor + 1$ colors to any chain.
\end{customthm}
\begin{proof}
As was noted in the introduction, the sharpness of the bound is witnessed by coloring every set $A \subseteq [n]$ of size at most~$n/2$ into a color $i\in A$, and coloring all subsets of size greater than $n/2$ into a distinct color~$c$. We now proceed with the proof of the lower bound.

Let the barycentric subdivision $\B^{n-2}\cong \S^{n-2}$ and the involution $\nu$ on it be as per \Cref{def-barycentric}. Recall that the vertices of $\B^{n-2}$ are precisely nontrivial subsets of $[n]$. Note that each point $x\in \B^{n-2}$ belongs to an interior of exactly one simplex $s = S_1, \dots, S_k$ and has in it \emph{barycentric coordinates} $\lambda_1, \dots, \lambda_k$; that is, $x = \sum_{i=1, \dots, k} \lambda_i S_i$, where $0<\lambda_i$, for $i=1, \dots, k$, and $\sum_{i=1, \dots, k} \lambda_i = 1$.


Given a Kneser coloring $\kappa$ of $\P([n])$, let us denote by $d(\kappa)$ the maximal number of distinct colors of a chain under $\kappa$. Let $C$ be the set of colors of $\kappa$, and, for $c\in C$, let $e_c = e(c)$ be the unit vector of the coordinate $c$ in $\R^C$. That is, $e_c(c) = 1$ and $e_c(c') = 0$, for all $c'\in C - c$. Let us define a map $\varphi_\kappa \colon \B^{n-2}\rightarrow \R^C$ as

$$\varphi_\kappa(x) = \sum_{i=1, \dots, k} \lambda_i e_{\kappa(S_i)},$$
where, again, $x = \sum_{i=1, \dots, k} \lambda_i S_i$ is the barycentric representation of $x$. Then $\varphi_\kappa$ is a continuous, moreover, piecewise-linear map. Let $\PP = \varphi_\kappa(\B^{n-1})$. Then $\PP\subseteq \R^C$ is a simplicial complex whose simplicies are images of the simplices of $\B^{n-2}$ under $\varphi_\kappa$. By construction, for any simplex $\sigma$, the dimension of $\varphi_\kappa(\sigma)$ is $\left|\kappa(\sigma)\right| - 1$. So, the dimension of $\PP$ is at most $d(\kappa) - 1$. Note that ``at most'' comes from the fact that the dimension of $\PP$ is only affected by the chains that do not contain $[n]$, while $d(\kappa)$ is the maximum over all chains.

We claim that $\varphi_\kappa$ does not collapse antipodal points. Indeed, suppose for some $x = \sum_{i=1, \dots, k} \lambda_i S_i$, $\varphi_\kappa(x) = \varphi_\kappa(\nu(x))$. That is

\begin{align*}
\sum_{i=1, \dots, k} \lambda_i  e_{\kappa(S_i)} &= \varphi_\kappa(x) = \varphi_\kappa(\nu(x)) \\
	&= \varphi_\kappa\left(\sum_{i=1, \dots, k} \lambda_i S_i\right) 
	= \sum_{i=1, \dots, k} \lambda_i  e_{\kappa([n] - S_i)}.
\end{align*}

Note that $S_1$ is the smallest subset in $S_1, \dots, S_k$. Then, for any $1\leq j\leq k$, $[n] - S_1 \supseteq [n] - S_j$, and hence $S_1 \cap ([n] - S_j) = \emptyset$, and so, as $\kappa$ is Kneser, $\kappa(S_1) \neq \kappa([n] - S_j)$. Then the projection of the left-hand side on $e_{\kappa(S_1)}$ is $\lambda_1 \neq 0$, and the similar projection of the right-hand side is $0$, a contradiction. 

But then $\varphi_\kappa$ is a continuous function from $\S^{n-2}$ into $\PP$, which does not collapse antipodal points, and so, by \Cref{thm:Jaw}, $2d(\PP)\geq n-2+1$. As $d(\PP) \leq d(\kappa)-1$, we get $2d(\kappa) - 2 \geq n-1$ and so $d(\kappa)\geq (n+1)/2$. As $d(\kappa)$ is integer, it improves to $d(\kappa)\geq \lceil (n+1)/2\rceil = 1+\lfloor n/2\rfloor$. 
\end{proof}

\section{Proof of Theorem~\ref{t:repagn}}

\begin{customthm}{C}[Agnostic replicability and global stability]
The following statements are equivalent for a concept class $\H$:
\begin{itemize}
\item $\H$ is finite.
\item $\rho_{\mathtt{agn}}(\H)>0$ (equivalently, $\List_{\mathtt{agn}}(\H)<\infty$). 
\end{itemize}
\end{customthm}
\begin{proof}

The direction that every finite class $\H$ is agnostically learnable by a list-replicable learning rule
follows by considering any proper learning rule. Indeed, such a rule always outputs a classifier in $\H$
and thus, 
\[\List_{\mathtt{agn}}(\H)\leq \lvert\H\rvert < \infty.\]

The converse direction is more involved;
We rely on the following definition from~\cite{AlonGHM23}.
\begin{definition}[$\gamma$-realizability and interpolation~\citep*{AlonGHM23}] \label{def:gammaRealizable}
Let $\H\subseteq\{\pm 1\}^\X$ be a concept class, let $\gamma\in(0,1)$. 
    A sequence~$S=((x_1,y_1),\ldots,(x_m,y_m))$ is \emph{$\gamma$-realizable} with respect to $\H$
    if for any probability distribution $Q$ over $S$ there exists $h\in\H$ such that 
\[
 \mathsf{corr}_Q(h) := 1-2L_Q(h) =  \E_{(x,y)\sim Q}[h(x)\cdot y] \ge \gamma.
\]
We say that a set $\{x_{1},\ldots,x_{d}\}\subseteq\mathcal{X}$ is $\gamma$-interpolated by $\H$
if for any $y_1,\ldots,y_d\in\{\pm 1\}$, the sequence $S=(x_{1},y_1)),\ldots,(x_{d},y_{d})$ 
is $\gamma$-realizable with respect to $\H$.
\end{definition}
Note that  $\gamma$-interpolation specializes the concept of shattering in the context of VC theory.  

Let $\H$ be an infinite concept class; it will be convenient to assume that $\H$ is symmetric in the sense that $h\in \H$ if and only if $-h\in \H$ for every $h\in\H$, where $(-h)(x)=-h(x)$ for all $x\in \X$. This assumption does not compromise generality:
indeed, notice that $\H\cup (-\H)$ is symmetric, where $-\H=\{-h : h\in \H\}$ and that
\begin{align*}
\List_{\mathtt{agn}}(\H\cup(-\H)) &\leq \List_{\mathtt{agn}}(\H) + \List_{\mathtt{agn}}(-\H)\\
                                  &= 2\List_{\mathtt{agn}}(\H).
\end{align*}
Thus, $\List_{\mathtt{agn}}(\H\cup(-\H))<\infty$ if and only if $\List_{\mathtt{agn}}(\H)<\infty$.
Hence, if $\H$ is not symmetric we replace it with~$\H\cup(-\H)$, which is symmetric.

\begin{lem}\label{lem:interpolation}
    Let $\H$ be an infinite symmetric concept class. Then, for every $d\in\mathbb{N}$ there is $\gamma>0$
    and a set $\{x_{1},\ldots,x_{d}\}\subseteq\mathcal{X}$ which is $\gamma$-interpolated by $\H$.
\end{lem}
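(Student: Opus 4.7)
The plan is to translate $\gamma$-interpolation into a convex-geometric condition on the set of patterns of $\H$ on $d$ points, and then use the infinitude of $\H$ to produce such points by induction on $d$.

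First I would reformulate the problem. Let $V_d := \{(h(x_1),\ldots,h(x_d)) : h\in\H\}\subseteq\{\pm 1\}^d$. Consider the zero-sum game where the adversary plays a distribution $Q$ over $S=((x_i,y_i))_{i=1}^d$ and the learner plays $h\in\H$, with payoff $\E_{(x,y)\sim Q}[h(x)y]$. Since the payoff depends on $h$ only through its pattern in $V_d$, which is a finite set, the game is effectively finite and the von Neumann minimax theorem applies. Consequently, the sequence $S$ is $\gamma$-realizable iff there exists $v\in\conv(V_d)$ with $v_i y_i\geq \gamma$ for all $i$. Varying over all $y\in\{\pm 1\}^d$ then shows that $\{x_1,\ldots,x_d\}$ is $\gamma$-interpolated iff $[-\gamma,\gamma]^d\subseteq\conv(V_d)$.

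Next I would reduce to a spanning condition. Because $\H$ is symmetric, $V_d=-V_d$, and hence $\conv(V_d)$ is centrally symmetric. If moreover $V_d$ linearly spans $\R^d$, then $\conv(V_d)$ is full-dimensional and the origin lies in its interior, so $[-\gamma,\gamma]^d\subseteq\conv(V_d)$ for some $\gamma>0$. Thus it suffices to produce $d$ points $x_1,\ldots,x_d\in\X$ such that $V_d$ spans $\R^d$.

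I would construct such points by induction on $d$. The base case $d=1$ is immediate, since symmetry of $\H$ gives $V_1=\{-1,+1\}$. For the inductive step, suppose $x_1,\ldots,x_{d-1}$ have been chosen so that $V_{d-1}$ spans $\R^{d-1}$. If no $x\in\X$ made the extended pattern set $V_d^x:=\{(h(x_1),\ldots,h(x_{d-1}),h(x)):h\in\H\}$ span $\R^d$, then for each $x$ there would be a nontrivial linear relation $\sum_{i<d}a_i(x)\,h(x_i)+a_d(x)\,h(x)=0$ valid for every $h\in\H$. The spanning hypothesis on $V_{d-1}$ forces $a_d(x)\neq 0$ (otherwise the relation would reduce to a nontrivial one on $V_{d-1}$), so $h(x)$ is a fixed linear function of $(h(x_1),\ldots,h(x_{d-1}))$. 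Applied to every $x\in\X$, this makes the restriction map $h\mapsto (h(x_1),\ldots,h(x_{d-1}))$ injective on $\H$, bounding $|\H|\leq 2^{d-1}$ and contradicting the infinitude of $\H$. Therefore some $x_d$ works, completing the induction.

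The main obstacle is the inductive step: the offending linear relation may a priori depend on $x$, but the inductive spanning hypothesis is exactly what pins it down so that $h(x)$ becomes a specific linear function of $h|_{\{x_1,\ldots,x_{d-1}\}}$; then ``no good $x$ exists'' collapses $\H$ to a set of size at most $2^{d-1}$. The first two steps are routine applications of the minimax theorem and of elementary convex geometry, so the real content sits in turning infinitude of $\H$ into the spanning property.
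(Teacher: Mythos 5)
Your proof is correct and covers the same ground as the paper's, but repackages both halves in a genuinely different way. The paper splits the lemma into its Lemma~\ref{lem:ind} and Lemma~\ref{lem:indint}. For the linear-algebra half, the paper argues globally: if the linear span of $\H$ inside $\R^\X$ has dimension $k<d$, then every $h\in\H$ is determined by its values on $k$ chosen points, forcing $\lvert\H\rvert\leq 2^k$; so an infinite class has $d$ hypotheses whose pattern vectors on some $d$ points are linearly independent. You reach the same cardinality contradiction by building the points one at a time, observing that if no $x$ extends the spanning set then $h(x)$ is a fixed linear combination of $h(x_1),\ldots,h(x_{d-1})$ for every $x$, which collapses $\H$ to at most $2^{d-1}$ elements; the inductive spanning hypothesis is precisely what pins $a_d(x)\neq 0$, so the step is sound. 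For the convex-geometry half, the paper's Lemma~\ref{lem:indint} is explicit: given a target sign vector $\bar y$, write $\bar y=\sum_i\alpha_i\bar h_i$, play the distribution over $\{\pm h_i\}$ proportional to $\lvert\alpha_i\rvert$, and read off $\gamma$ as the minimum over $\bar y$ of $1/\sum_i\lvert\alpha_i\rvert$. You instead invoke the minimax theorem (legitimate since $V_d$ is finite) to reformulate $\gamma$-interpolation as $[-\gamma,\gamma]^d\subseteq\conv(V_d)$, and then obtain the cube from the fact that a centrally symmetric, full-dimensional polytope contains a neighborhood of the origin. These are two unwindings of the same LP duality: the paper's version is more explicit about the value of $\gamma$, while yours isolates the clean geometric content ($\gamma$-interpolation is exactly inscribing a scaled $\ell_\infty$-ball in the pattern polytope) and, as a side benefit, proves the stated equivalence and not merely the direction you use.
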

Before proving Lemma~\ref{lem:interpolation}, we show how to use it to derive Theorem~\ref{t:repagn}.
Assume towards contradiction that $\H$ is agnostically list-replicably learnable with list size $\List_{\mathtt{agn}}(\H)=K<\infty$.
By Lemma~\ref{lem:interpolation}, there exists $\gamma>0$ and a set
\(U=\{x_{1},\ldots,x_{2K}\}\subseteq\mathcal{X}\) which is $\gamma$-interpolated by $\H$.
Consider the set~$\Theta$ of all distributions supported on sequences $(x_1,y_1),\ldots (x_{2K},y_{2K})$,
where $y_i$'s are in $\{\pm 1\}$.
Notice that $\Theta$ is isomorphic to the $2K-1$-dimensional $\ell_1$-sphere (i.e.\ the boundary of the $2K$-dimensional $\ell_1$-ball). 
This follows by identifying each vector $(t_1,\ldots, t_{2K})$ on the $\ell_1$-sphere (i.e.\ $\sum\lvert t_i\rvert = 1$) 
with the distribution that assigns probability $\lvert t_i\rvert$ to the example $(x_i,\mathtt{sign}(t_i))$.
This transformation maps the standard involution~``$x\to -x$'' in~$\R^{2K}$ to a fixed-point free involution $\nu:\Theta\to \Theta$:
for every distribution $p$ supported on $\{(x_i,y_i)\}_{i=1}^{2L}$, the distribution $q=\nu(p)$ is supported
on $\{(x_i,-y_i)\}_{i=1}^{2K}$ and $q(x_i,-y_i):=p(x_i,y_i)$.
Notice that for every distribution $\D\in\Theta$ and for every hypothesis~$h$,
\begin{equation}\label{eq:1}
L_{\D}(h) + L_{\nu(\D)}(h) = 1.
\end{equation}
Further, notice that each distribution $\D\in\Theta$ is supported on a $\gamma$-realizable sequence
$\{(x_i,y_i)\}_{i=1}^{2K}$. In particular, 
\begin{equation}\label{eq:2}
(\forall \D\in \Theta)(\exists h\in \H): L_{\D}(h)\leq \frac{1-\gamma}{2}.
\end{equation}

Now, let $\A$ be an agnostic $(\frac{\gamma}{2}, K)$-list replicable learner.
Pick the confidence parameter $\delta>0$ to be sufficiently small 
such that for every distribution $\D\in \Theta$ there exists a hypothesis $h$ satisfying:
\begin{itemize}
\item Given $n(\frac{\gamma}{2}, K, \delta)$ i.i.d examples drawn from $\D$,
the learner $\A$ outputs $h$ with probability $>\frac{1}{K+1}$.
\item $L_\D(h)< \frac{1-\gamma}{2}+\frac{\gamma}{2}=\frac{1}{2}$. (This follows by Equation~\ref{eq:2}
because $\A$ is an agnostic $(K,\frac{\gamma}{2})$-list replicable learner).
\end{itemize}
Let $H_\D=\{h\vert_U: \text{$h$ satisfies the above items}\}$. Thus, $0< \lvert H_\D\rvert \leq K$. For every $h:U\to\{\pm 1\}$, define
\[C_h = \{\D\in\Theta: h\in H_\D \}.\] 
Notice that each $C_h$ is open. We claim that $\{C_h : h:U\to\{\pm 1\}\}$ is an antipodal-free cover of $\Theta$.
Indeed, it covers $\Theta$ because $H_\D\neq\emptyset$ for every distribution $\D\in\Theta$.
That the cover is antipodal free follows by Equation~\ref{eq:1}, 
because every $C_h$ consists only of distributions $\D$ for which $L_\D(h)<\frac{1}{2}$.
By Theorem~\ref{t:localLS} we get that there exists a distribution $\D\in \Theta$ such that $\D$ belongs to at least $\lceil(2K-1 + 3)/2 \rceil = K+1$ distinct sets $C_{h_i}$;
consequently, each of these $h_i$ belongs to $H_\D$, which implies that $\lvert H_\D\rvert\geq K + 1$, yielding the desired contradiction.
\end{proof}

\subsection*{Proof of Lemma~\ref{lem:interpolation}}
 Lemma~\ref{lem:interpolation} is a direct corollary of the next two lemmas.
\begin{lem}\label{lem:ind}
Let $\H\subseteq\{\pm 1\}^\X$ be an infinite class.
Then, for every $d\in \mathbb{N}$ there exist $d$ hypotheses $h_1\ldots h_d\in \H$
and $d$ points $x_1,\ldots x_d \in \X$ such that the $d$ vectors $(h_i(x_1),\ldots h_i(x_d))$, $i=1,\ldots,d$
are linearly independent over~$\mathbb{R}$.
\end{lem}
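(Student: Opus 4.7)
The plan is to prove the lemma by induction on $d$, extending a partial choice of hypotheses and points one coordinate at a time while preserving non-singularity of the associated $\pm 1$ matrix. The base case $d=1$ is immediate: any $h_1\in\H$ and any $x_1\in\X$ yield $h_1(x_1)\in\{\pm 1\}$, giving a non-zero $1\times 1$ matrix.

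For the inductive step, suppose that $h_1,\ldots,h_{d-1}\in\H$ and $x_1,\ldots,x_{d-1}\in\X$ have been found so that $M:=[h_i(x_j)]_{i,j\leq d-1}$ is non-singular. Let $W=\mathrm{span}_\R(h_1,\ldots,h_{d-1})\subseteq \R^\X$. First I will argue that $\H\not\subseteq W$. The evaluation map $W\to\R^{d-1}$ sending $f\mapsto(f(x_1),\ldots,f(x_{d-1}))$ is linear, and its matrix in the basis $h_1,\ldots,h_{d-1}$ is $M$; since $M$ is non-singular, this map is injective. Hence if $\H\subseteq W$, the restriction would embed $\H$ into $\{\pm 1\}^{d-1}$, forcing $\lvert\H\rvert\leq 2^{d-1}$ and contradicting infinitude of $\H$. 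Pick any $h_d\in\H\setminus W$; then $h_1,\ldots,h_d$ are linearly independent as elements of $\R^\X$.

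It remains to choose $x_d\in\X$ for which the full matrix $M'(x_d)=[h_i(x_j)]_{i,j\leq d}$ is non-singular. Expanding $\det M'(x_d)$ along the last column gives
\[\det M'(x_d)=\sum_{i=1}^d c_i\,h_i(x_d),\]
where the cofactors $c_i$ do not depend on $x_d$ and $c_d=\det M\neq 0$. If $\det M'(x_d)=0$ for every $x_d\in\X$, then $\sum_{i=1}^d c_i h_i$ vanishes identically on $\X$, yielding $h_d=-c_d^{-1}\sum_{i<d}c_i h_i\in W$, contradicting $h_d\notin W$. So some $x_d$ works, completing the induction.

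I expect the main obstacle to be the step that rules out $\H\subseteq W$: this is where the assumption that $\H$ is infinite is genuinely used. The underlying observation, reminiscent of Sauer--Shelah, is that a collection of $\pm 1$-valued functions contained in a finite-dimensional subspace of $\R^\X$ on which a finite set of evaluation points is separating can contain only finitely many elements. Once this is secured, the cofactor expansion cleanly produces either the desired new point $x_d$ or a forbidden linear dependence that places $h_d$ back in $W$.
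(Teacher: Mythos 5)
Your proof is correct. It rests on the same core observation as the paper's: a family of $\{\pm 1\}$-valued functions lying in a $k$-dimensional span on which $k$ evaluation points are separating has at most $2^k$ members, which is incompatible with $\H$ being infinite (the paper even records this as an SSP-type remark). The difference is in the packaging: the paper argues by contradiction in one shot, taking a basis $h_1,\ldots,h_k$ ($k<d$) of the span of $\H$, invoking ``basic linear algebra'' to get points $x_1,\ldots,x_k$ witnessing independence, and concluding $\lvert\H\rvert\leq 2^k$; you instead run a greedy induction, using the $2^{d-1}$ bound at each step to escape the current span $W$ and then making the point-selection step explicit via cofactor expansion of $\det M'(x_d)$ along the last column. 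Your version is more constructive and self-contained (it spells out the standard fact that linearly independent functions admit finitely many points certifying their independence), at the cost of a slightly longer argument; the paper's version is shorter but leaves that linear-algebra step implicit. Both are complete and correct.
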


\begin{lem}\label{lem:indint}
Assume that $h_1,\ldots,h_d$ and $x_1,\ldots,x_d$ are as in the conclusion of Lemma~\ref{lem:ind}.
Then, there exists $\gamma>0$ such that the class $\{\pm h_i : i=1,\ldots,d\}$ $\gamma$-interpolates
$\{x_1,\ldots x_d\}$.
\end{lem}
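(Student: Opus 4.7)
The plan is to recast $\gamma$-interpolation as a linear-algebraic statement about the $d\times d$ matrix $M$ with entries $M_{ij} = h_i(x_j)$, which is invertible by hypothesis. Given a target labeling $y = (y_1,\ldots,y_d) \in \{\pm 1\}^d$ and a probability distribution $q$ on $\{x_1,\ldots,x_d\}$ (these jointly specify an arbitrary distribution $Q$ in the definition of $\gamma$-realizability), the correlation of $\pm h_i$ with the labels $y$ under $q$ equals
\[
\pm \sum_j q_j h_i(x_j) y_j = \pm (Mv)_i,
\]
where $v \in \R^d$ has coordinates $v_j = q_j y_j$. Hence the maximal correlation achievable within $\{\pm h_i : i=1,\ldots,d\}$ is exactly $\|Mv\|_\infty$, and the $\gamma$-realizability of the labeled sequence amounts to $\|Mv\|_\infty \geq \gamma$.

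Next I would observe that $\|v\|_1 = \sum_j q_j = 1$, and conversely every $v$ on the $\ell_1$-unit sphere $S := \{v \in \R^d : \|v\|_1 = 1\}$ arises from some pair $(q,y)$ via $q_j = |v_j|$ and $y_j = \mathrm{sign}(v_j)$ (with an arbitrary choice of sign when $v_j=0$). Therefore the lemma reduces to showing that
\[
\gamma := \min_{v \in S} \|Mv\|_\infty
\]
is strictly positive. Since $S$ is compact and $v \mapsto \|Mv\|_\infty$ is continuous, the minimum is attained at some $v^* \in S$; and since $M$ is invertible, $\|Mv^*\|_\infty = 0$ would force $v^* = 0 \notin S$, a contradiction. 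Hence $\gamma > 0$, as required.

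I do not foresee a serious obstacle here: once the definitions are unpacked, the statement is essentially the observation that a strictly positive continuous function on a compact set is bounded away from zero. The only point worth double-checking is the bijective correspondence between pairs $(q,y)$ and points $v \in S$, which hinges on the fact that the correlation expression is linear in $v$ and only the sign pattern plus the magnitudes of the entries matter.
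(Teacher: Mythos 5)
Your proof is correct, but it takes a different route from the paper's. You work entirely on the primal side: you parametrize all pairs (label vector, distribution over the sample) by points $v$ of the $\ell_1$-unit sphere via $v_j=q_jy_j$, note that the best correlation achievable in $\{\pm h_i\}$ is $\|Mv\|_\infty$ with $M_{ij}=h_i(x_j)$, and conclude $\gamma=\min_{\|v\|_1=1}\|Mv\|_\infty>0$ by compactness plus the injectivity of $M$. The paper instead fixes a label vector $\bar y$, expands it in the basis $\{\bar h_i\}$ as $\bar y=\sum_i\alpha_i\bar h_i$, and uses the normalized coefficients $|\alpha_i|/\sum_i|\alpha_i|$ (with signs absorbed into $\pm h_i$) as a mixture over hypotheses whose expected correlation with every single example equals $1/\sum_i|\alpha_i|$; averaging over the example distribution and applying the probabilistic method then yields a hypothesis with correlation at least $\gamma=\min_{\bar y}1/\sum_i|\alpha_{\bar y,i}|$. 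The paper's dual-mixture argument produces an explicit value of $\gamma$ in terms of the expansion coefficients, whereas your argument is more streamlined and handles all $(y,q)$ at once, at the cost of being purely existential as stated; it can easily be made quantitative too, since $\|v\|_1\le\|M^{-1}\|_{\infty\to 1}\|Mv\|_\infty$ gives $\gamma\ge 1/\|M^{-1}\|_{\infty\to 1}$, which is essentially the dual of the paper's bound. One minor bookkeeping point you already handle correctly: the correspondence $v\mapsto(q,y)$ requires an arbitrary sign choice on zero coordinates, which is harmless since only the inequality $\|Mv\|_\infty\ge\gamma$ over the sphere is needed.
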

\begin{proof}[Proof of Lemma~\ref{lem:ind}]
Assume towards contradiction that for some $d$ there are no $h_i$'s and $x_i$'s as stated.
Then, by basic linear algebra, there exists a basis  $\{h_1,\ldots, h_k\}$, for $k<d$, of the linear span of $\H$ over\footnote{Here we treat $\{\pm 1\}^\X$ as a subset of the linear space $\mathbb{R}^\X$.} $\mathbb{R}$.
Now, pick~$x_1,\ldots x_k\in \X$ such that the $k$ vectors $(h_i(x_1),\ldots, h_i(x_k))$ are linearly independent.
It follows that any function $f:\X\to\R$ in the linear span of $\H$ is uniquely determined by its values on $x_1,\ldots x_k$.
(That is, if $f_1,f_2$ are in the linear span of $\H$, and $f_1(x_i)=f_2(x_i)$ for all $i\leq k$ then~$f_1=f_2$).
Thus, the number of hypotheses in $\H$ is at most the number of $\{\pm 1\}$-valued functions on $\{x_1,\ldots, x_k\}$,
which is $2^k$. This contradicts the assumption that $\H$ is infinite.
\end{proof}
\begin{remark}
The proof of Lemma~\ref{lem:ind} yields an analogue of the Sauer-Shelah-Perles (SSP) Lemma~\citep{Sauer72Lemma} where the VC dimension is replaced by the linear dimension. The SSP Lemma asserts that if a set of $n$-bit vectors has VC dimension $d$ then there are at most ${n \choose \leq d}$ vectors in this set. The above argument gives that if a set of binary vectors has linear dimension $d$ then there are at most $2^d$ vectors in the set.
\end{remark}

\begin{proof}[Proof of Lemma~\ref{lem:indint}]
Let $\bar h_i = (h_i(x_1),\ldots, h_i(x_d))\in \{\pm 1\}^d$.
By assumption, \(\{\bar h_i : i\leq d\}\) is a basis of $\R^d$. Hence for every $\bar y\in\{\pm 1\}^d$
there are coefficients $\alpha_{\bar y,i}$ such that $\sum_{i}\alpha_{\bar y,i}\bar h_i = \bar y$.
Set 
\[\gamma = \min_{\bar y\in\{\pm 1\}^d} \frac{1}{\sum_{i=1}^d \lvert \alpha_{\bar y,i}\rvert} > 0.\]

Fix an arbitrary $\bar y = (y_1,\ldots, y_d)\in \{\pm 1\}^d$. We need to show that the sequence $(x_1,y_1),\ldots,(x_d,y_d)$ is $\gamma$-realizable
by~$\{\pm h_i : i\leq d\}$. In what follows, we denote $\alpha_{\bar y,i}$ by simply $\alpha_i$. Let 
    $$q_i = \frac{\lvert \alpha_i\rvert}{\sum_{i=1}^d \lvert \alpha_i\rvert}$$ 
and $\bar v_i = \sign(\alpha_i) \bar h_i \in \{\pm \bar h_i : i\leq d\}$.
Note that $q_i\geq 0$ for all $i$
and that $\sum_{i}q_i=1$. We thus can treat $q_i$'s as a probability distribution $q$ over the class $\{\pm h_i: i\leq d\}$. Then
\begin{align*} 
    \E_{h\sim q} \bar h = \sum_{i=1}^d q_i \bar v_i &= 
    \frac{\sum_{i=1}^d \bar v_i \lvert \alpha_i\rvert}{\sum_{i=1}^d \lvert \alpha_i\rvert}
    = \frac{\sum_{i=1}^d \bar h_i \alpha_i}{\sum_{i=1}^d \lvert \alpha_i\rvert}
    = \frac{\bar y}{\sum_{i=1}^d \lvert \alpha_i\rvert}.
\end{align*}
Thus, for every $i\leq d$,
\begin{equation}\label{eq:3}
    \E_{h\sim q}[y_i\cdot h(x_i)] 
    = \frac{y_i^2}{\sum_{i=1}^d \lvert \alpha_i\rvert}
    = \frac{1}{\sum_{i=1}^d \lvert \alpha_i\rvert}
    \geq \gamma.
\end{equation}
Now, let $p$ be a probability distribution over the sample $(x_1,y_1),\ldots, (x_d,y_d)$. Then,
\begin{align*}
\E_{h\sim q} \E_{(x,y)\sim p}[y\cdot h(x)] 
&= \E_{(x,y)\sim p} \E_{h\sim q}[h(x)\cdot y]\\
    &\geq \E_{(x,y)\sim p} [\gamma] \tag{By Equation~\ref{eq:3}}\\
    &=\gamma.
\end{align*}
Thus, in particular, there exists $h\in\{\pm h_i : i\leq d\}$ for which \(\E_{(x,y)\sim p}[h(x)\cdot y]\geq \gamma\), as required.

\end{proof}

\section{Proofs of Theorems \ref{t:weaklower} and \ref{t:finite}}

We will utilize some machinery from Section~3.3 in~\cite{Chase23rep}.

Let $\H$ be a finite class. Denote by $\Delta = \Delta_\H$ the collection of $\H$-realizable distributions endowed with the \emph{total-variation distance} ($\TV$). Recall that, by the definition, for distributions $\D_1$ and $\D_2$ over $X\times\{\pm1\}$:

\begin{align*}
	\TV(\D_1, \D_2) &= \sum_{e \colon \D_1(e)\geq \D_2(e)}  (\D_1(e) - \D_2(e))
	= \frac{1}{2}\sum_{e}  \left|\D_1(e) - \D_2(e)\right|.
\end{align*}
We find it helpful to think of $\Delta$ as being isometrically embedded into $[0,1]^{X\times\{\pm1\}}$ with half of the $l_1$ norm of a difference as a metric. In particular, $\Delta$ is a closed subspace of a compact metric space, and hence is compact.

\begin{theorem}[Theorem 7 in \cite{Chase23rep}]\label{t:old-t7}
The following statements are equivalent for a finite class $\H$ and $\varepsilon > 0$:
\begin{enumerate}
	\item $\List(\H, \varepsilon) \leq L$;
	\item There exists $\delta > 0$ and a coloring $\D\mapsto h_\D$ of $\Delta(\H)$, where $h_\D \in \{\pm1\}^X$ is an hypothesis, such that $L_\D(h_\D)\leq \varepsilon$ and $\left|\left\{h_{\D'}~:~\TV(\D, \D')\leq \delta \right\}\right| \leq L$, for every $\D\in \Delta$.
\end{enumerate}
\end{theorem}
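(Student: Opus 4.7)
I will prove the two implications by converting colorings into algorithms and vice versa. Both directions hinge on the contraction of total variation under product sampling, $\TV(\D^n, (\D')^n) \leq n \cdot \TV(\D,\D')$, which propagates small TV perturbations of the unknown distribution to small perturbations of the algorithm's output distribution over $\{\pm 1\}^X$.

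\textbf{From coloring to algorithm, $(2)\Rightarrow(1)$.} Assume we have a coloring $\D \mapsto h_\D$ of $\Delta$ with parameter $\delta$ satisfying the stated local property and $L_\D(h_\D) \leq \varepsilon/2$ (we start with a slightly tighter coloring so as to absorb sampling slack). The algorithm $\A$ draws $n$ i.i.d.\ samples, forms the empirical distribution $\hat\D$, projects it to a closest $\tilde\D \in \Delta$ (breaking ties by a fixed ordering on $\Delta$), and outputs $h_{\tilde\D}$. Since $\H$ is finite, points of $X$ may be quotiented by the equivalence ``agrees on all of $\H$'', reducing the effective domain to size at most $2^{|\H|}$; standard concentration then gives $\TV(\hat\D,\D) \leq \delta/4$ with probability at least $1-\delta'$ for $n$ large enough. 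Because $\D$ itself is in $\Delta$, the projection satisfies $\TV(\tilde\D,\hat\D) \leq \TV(\D,\hat\D)$, so by the triangle inequality $\TV(\tilde\D,\D) \leq \delta$. On this event $h_{\tilde\D}$ lies in a set of at most $L$ hypotheses determined only by $\D$, and $L_\D(h_{\tilde\D}) \leq L_{\tilde\D}(h_{\tilde\D}) + \TV(\tilde\D,\D) \leq \varepsilon$.

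\textbf{From algorithm to coloring, $(1)\Rightarrow(2)$.} Let $\A$ be an $(\varepsilon,L)$-list replicable learner run with confidence $\delta_0 < 1/(L+1)$ and sample size $n$. Write $p_\D(h) := \Pr_{S\sim \D^n}[\A(S)=h]$ and define $h_\D$ to be the $p_\D$-mode, breaking ties by a fixed ordering on $\{\pm 1\}^X$. List-replicability yields a set $H_\D$ of size $L$ with $\sum_{h \in H_\D} p_\D(h) \geq 1 - \delta_0$, each element of which has loss at most $\varepsilon$ under $\D$. The choice $\delta_0 < 1/(L+1)$ ensures $(1-\delta_0)/L > 1/(L+1) \geq \delta_0$, so the overall mode must live in $H_\D$, giving $p_\D(h_\D) \geq (1-\delta_0)/L$ and $L_\D(h_\D) \leq \varepsilon$. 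For any $\D' \in \Delta$ with $\TV(\D,\D') \leq \delta$, the contraction bound gives $|p_\D(h) - p_{\D'}(h)| \leq n\delta$ uniformly in $h$. Choose $\delta$ so that $n\delta < (1-\delta_0)/L - 1/(L+1)$. Then for every such $\D'$, $p_\D(h_{\D'}) \geq p_{\D'}(h_{\D'}) - n\delta \geq (1-\delta_0)/L - n\delta > 1/(L+1)$. Since at most $L$ distinct hypotheses can carry $p_\D$-mass strictly exceeding $1/(L+1)$, the local set $\{h_{\D'} : \TV(\D,\D') \leq \delta\}$ has size at most $L$.

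\textbf{Main obstacle.} The delicate point is landing on exactly $L$ (rather than $L+1$ or $2L$) in the $(1)\Rightarrow(2)$ direction, which forces the coupled quantitative choices $\delta_0 < 1/(L+1)$ and $n\delta < (1-\delta_0)/L - 1/(L+1)$ together with a strict-inequality pigeonhole at threshold $1/(L+1)$; a sloppier coloring rule (e.g.\ a top-$L$ fiber) produces a constant-factor overshoot. A secondary issue in $(2)\Rightarrow(1)$ is that $\Delta$ is generally non-convex, so one cannot project via a Euclidean formula; the resolution is the triangle-inequality bound above, which exploits the fact that the true $\D$ itself is a competitor for the projection.
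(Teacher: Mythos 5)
Your direction $(1)\Rightarrow(2)$ is correct: defining $h_\D$ as the mode of the output distribution $p_\D$, using $\TV(\D^n,(\D')^n)\le n\,\TV(\D,\D')$, and pigeonholing at the threshold $1/(L+1)$ is exactly the mechanism behind Theorem~7 of \citet{Chase23rep} (which this paper cites rather than reproves), and your quantitative choices $\delta_0<1/(L+1)$ and $n\delta<(1-\delta_0)/L-1/(L+1)$ make that argument go through cleanly. This is also the direction the present paper actually relies on (via \Cref{c:old-t7-closed}) for its lower bounds.

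The gap is in $(2)\Rightarrow(1)$. The statement fixes a single $\varepsilon$ on both sides, but you begin by assuming a coloring with $L_\D(h_\D)\le\varepsilon/2$, ``to absorb sampling slack''. Condition (2) only provides a coloring at level $\varepsilon$, and the slack you need to absorb is precisely the obstruction: with the given coloring, every hypothesis your algorithm is likely to output at $\D$ is the color of some nearby $\D'$, and all you can conclude is $L_\D(h_{\D'})\le L_{\D'}(h_{\D'})+\TV(\D,\D')\le\varepsilon+O(\delta)$. Since the locality radius $\delta$ may be shrunk but must stay positive, your construction only shows $\List(\H,\varepsilon')\le L$ for every $\varepsilon'>\varepsilon$, not $\List(\H,\varepsilon)\le L$; the definition of an $(\varepsilon,L)$-list replicable learner requires every list member to have loss below $\varepsilon$ under the true $\D$ itself. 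So as written you have proved a genuinely weaker implication (coloring at level $\varepsilon/2$, or at all levels, implies a learner at level $\varepsilon$ --- which is what the original, $\forall\varepsilon$-quantified Theorem~7 needs), but not the fixed-$\varepsilon$ equivalence claimed here; closing the boundary at $\varepsilon$ requires an additional argument (or an explicit convention on how the $\le\varepsilon$ / $<\varepsilon$ thresholds are matched), which you should either supply or flag. A second, smaller omission in the same direction: you bound the loss only of the output $h_{\tilde\D}$, but the list you implicitly declare is the whole color set of the $\delta$-ball around $\D$, so you must state the same loss bound for all of its members (it does hold, by the same Lipschitz estimate, once $\delta$ is taken small relative to $\varepsilon$).
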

In the original statement of Theorem 7 in \cite{Chase23rep}, condition (1) was stated as ``$\List(\H) \leq L$'', and (2) as ``For every $\varepsilon > 0$ $\dots$''. Our \Cref{t:old-t7} is thus a slight strengthening of it, and the original theorem follows from it by letting $\varepsilon\rightarrow 0$. The original proof, however, essentially proves the strengthened statement.
%

\begin{cor}\label{c:old-t7-closed}
	For a finite class $\H$, $\List(\H, \varepsilon)$ is the minimal integer $L$ such that there exist a closed cover $\F = \left\{A_h~|~h\in \{\pm1\}^X\right\}$ of $\Delta$ of overlap-degree at most $L$, such that $L_\D(h) \leq \varepsilon$ for all $h\in \{\pm1\}^X$ and $\D\in A_h$.
\end{cor}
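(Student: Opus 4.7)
\textbf{Proof proposal for Corollary \ref{c:old-t7-closed}.} The plan is to derive the corollary from Theorem \ref{t:old-t7} by proving the two inequalities separately. Let $L^{\ast}$ denote the minimal integer admitting a closed cover as in the statement; we will show $\List(\H,\varepsilon)\leq L^{\ast}$ and $\List(\H,\varepsilon)\geq L^{\ast}$. The main technical point is to mediate between the pointwise ``$\delta$-neighborhood'' formulation in Theorem \ref{t:old-t7}(2) and the global closed-cover formulation.

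For the direction $L^{\ast}\leq\List(\H,\varepsilon)$, suppose $\List(\H,\varepsilon)\leq L$. By Theorem \ref{t:old-t7} there are $\delta>0$ and a coloring $\D\mapsto h_\D$ with $L_\D(h_\D)\leq\varepsilon$ and $|\{h_{\D'}:\TV(\D,\D')\leq\delta\}|\leq L$ for every $\D\in\Delta$. For each $h\in\{\pm1\}^X$ set $A_h := \overline{\{\D\in\Delta:h_\D=h\}}$, where the closure is taken in the $\TV$ topology. Each $A_h$ is closed by construction, and $\bigcup_h A_h=\Delta$ since $\D\in A_{h_\D}$ for every $\D$. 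The map $\D\mapsto L_\D(h)$ is $1$-Lipschitz in $\TV$, so the loss condition $L_\D(h)\leq\varepsilon$ extends from the dense subset $\{\D:h_\D=h\}$ to its closure $A_h$. For the overlap-degree: if $\D\in\bigcap_{h\in S}A_h$, then for each $h\in S$ we can pick $\D_h$ with $\TV(\D,\D_h)\leq\delta$ and $h_{\D_h}=h$, so $S\subseteq\{h_{\D'}:\TV(\D,\D')\leq\delta\}$ and hence $|S|\leq L$.

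For the direction $\List(\H,\varepsilon)\leq L^{\ast}$, fix a closed cover $\{A_h\}$ with overlap-degree at most $L$ and the loss condition. Fix a well-ordering of $\{\pm1\}^X$ and for each $\D$ let $h_\D$ be the minimum $h$ with $\D\in A_h$; then $L_\D(h_\D)\leq\varepsilon$ automatically. It remains to produce $\delta>0$ such that $|\{h_{\D'}:\TV(\D,\D')\leq\delta\}|\leq L$ for every $\D$, after which Theorem \ref{t:old-t7}(2)$\Rightarrow$(1) finishes the job. I would argue by contradiction: if no such $\delta$ exists, extract a sequence $\D^{(n)}\in\Delta$ together with $L+1$ neighbours $\D^{(n)}_1,\dots,\D^{(n)}_{L+1}$ at $\TV$-distance $\leq 1/n$ whose colours $h^{(n)}_i:=h_{\D^{(n)}_i}$ are pairwise distinct; then $\D^{(n)}_i\in A_{h^{(n)}_i}$. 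Since $\H$ is finite, every $\H$-realizable distribution is supported on the $h^{\ast}$-consistent set for some $h^{\ast}\in\H$, and the ``relevant'' coordinates for a hypothesis $h$ to satisfy $L_\D(h)\leq\varepsilon$ are controlled; this lets us replace $\{\pm1\}^X$ by a sufficiently rich finite (or effectively finite) family of restrictions so that the colours $h^{(n)}_i$ can be assumed to stabilize to fixed distinct values $h_1,\dots,h_{L+1}$ along a subsequence, using compactness of $\Delta$ in the resulting topology. Passing to the limit $\D^{(n)}\to\D^{\ast}$, closedness of the $A_{h_i}$ gives $\D^{\ast}\in\bigcap_{i=1}^{L+1}A_{h_i}$, contradicting overlap-degree $\leq L$.

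The delicate step is the stabilization argument in the second direction: naively there may be uncountably many nonempty $A_h$'s and $\Delta$ need not be $\TV$-compact when $X$ is infinite. The expected resolution is to identify two hypotheses that coincide on the support of the limiting $\D^{\ast}$ and to work modulo this equivalence, exploiting that (i) only hypotheses with low loss on $\H$-realizable distributions are relevant, and (ii) every realizable $\D$ is dominated by a hypothesis in the finite class $\H$; this reduces the problem to a finite combinatorial collision and makes the closed-cover argument go through.
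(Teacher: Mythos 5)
Direction $L^\ast\leq\List(\H,\varepsilon)$ is essentially identical to the paper's: both take $A_h$ to be the closure of the color class $\zeta^{-1}(h)$, extend the loss bound by continuity of $\D\mapsto L_\D(h)$, and bound overlap-degree by pulling back a $\delta$-close representative for each color. For the converse direction your route is genuinely different. The paper thickens the closed cover $\{A_h\}$ to a cover $\{B_h\}$ with $A_h\subseteq B_h^\circ$ while preserving all intersection patterns, using the $\infty$-normality lemma (Lemma~\ref{lem:normal}), and then sets $\delta=\tfrac12\min_h\TV(A_h,\Delta\setminus B_h^\circ)>0$ by compactness. You instead argue by contradiction with a sequence $\D^{(n)}$ and $L+1$ shrinking neighbours of stabilizing colors, passing to a limit and using closedness of the $A_{h_i}$ to produce a point of overlap-degree $L+1$. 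This is a valid sequential alternative; both arguments use compactness of $\Delta$ and require only finitely many colors to actually occur.

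The concern you flag in your last paragraph — uncountably many nonempty $A_h$'s and non-compactness of $\Delta$ when $X$ is infinite — is a real issue, but it is resolved by a reduction the paper makes elsewhere (in the proof of Theorem~\ref{t:finite}): since $\H$ is finite, one may WLOG identify domain points on which all $h\in\H$ agree, reducing to $|X|\leq 2^{|\H|}$ finite, whence $\{\pm1\}^X$ is finite, $\Delta$ is a compact polytope, and your subsequence/pigeonhole argument goes through cleanly (as does the paper's application of $\infty$-normality, which also only covers finitely many sets). Your proposed workaround (``identify hypotheses that coincide on the support of $\D^\ast$ and work modulo equivalence'') is more vague than this clean reduction and, as written, does not close the gap: the support of $\D^\ast$ is only known after the limit has been extracted, and the equivalence may change along the sequence. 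I'd replace that paragraph with the finite-$X$ reduction and state it up front, after which the rest of your argument is sound.
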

\begin{proof}
	The proof is by establishing the equivalence of the statement with \Cref{t:old-t7} (2).
	
	In one direction, suppose that, for a given $\varepsilon >0$, we have $\delta$ and a coloring $\zeta\colon \Delta \rightarrow \{\pm1\}^X$ that satisfy the conditions of \Cref{t:old-t7} (2). For an hypothesis $h$, let us define $A_h$ as the closure of $\zeta^{-1}(h)$. By construction, it is a closed cover of $\Delta$. Let us take arbitrary $\D\in A_h$, then $\D$ can be approximated by the sequence $\D_i$ of realizable distributions such that $\zeta(\D_i) = h$, and hence $L_{\D_i}(h)\leq \varepsilon$. As the population loss is continuous with respect to the distribution, $L_{\D}(h)\leq \varepsilon$, as needed. Finally, if an arbitrary $\D$ is covered by $A_h$, then there is $\D'$ in the $\delta$-ball around $\D$ for which $\zeta(\D') = h$, and so there are at most $L$ such sets.
	
	In the other direction, again, for a given $\varepsilon >0$, suppose there is a closed cover $\F$ from the statement of the corollary. Let us take $\zeta$ to be an arbitrary coloring consistent with $\F$, that is, such that for every $\D\in \Delta$, $\D \in A_{\zeta(\D)}$. Trivially, $L_\D(\zeta(\D))\leq \varepsilon$, for any $\D\in \Delta$, and we only need to find suitable $\delta$ to satisfy the second condition on the coloring.
	
	Let $\G$ be a closed cover obtained from $\F$ by $\infty$-normality property, see \Cref{sec-normal-spaces} for the definition. That is, $\G = \left\{B_h~|~h\in \{\pm1\}^X\right\}$, $A_h \subseteq B^\circ_h$ for all $h\in \{\pm1\}^X$, and for any $H\subseteq \{\pm1\}^X$, $\bigcap_{h\in H}B_h$ is nonempty if and only if $\bigcap_{h\in H}A_h$ is. In particular, $\G$ and $\F$ have the same overlap-degree, which is at most $L$. By compactness of $\Delta$, $\TV(A_h, \Delta - B^\circ_h) >0$ for every $h\in \{\pm1\}^X$. Let $\delta = \frac{1}{2}\min_{h} \TV(A_h, \Delta - B^\circ_h)$, and let us take an arbitrary $\D\in \Delta$. By the choice of $\delta$, if $A_h$ intersects the $\delta$-ball $U_\delta(\D)$ around $\D$, then $\D\in B_h$. So 
	\begin{align*}
		\left|\left\{h_{\D'}~:~\TV(\D, \D')\leq \delta \right\}\right|
				&\leq \left|\left\{h~:~A_h \cap U_\delta(\D)\neq \emptyset \right\}\right| \\
				&\leq \left|\left\{h~:~\D \in B_h \right\}\right| \leq L,
	\end{align*}
	where the last inequality is by the bound on the overlap-degree of $\G$.
\end{proof}

It will be convenient to have an alternative to $\Delta$ geometric representation of the realizable distributions of~$\H$. For $\D\in\Delta$, let $\D_\theta\in [-1, 1]^X$ be defined as $\D_\theta(x) = \D(x, 1)$ if $\D(x, 1) >0$ and $\D_\theta(x) = -\D(x, -1)$ otherwise. Let $\Theta = \{\D_\theta~:~\D\in \Delta\}$. Note that, as $\D$ is realizable, at most one of $\D(x, 1)$ and $\D(x, -1)$ is nonzero, for every $x\in X$. From this easily follows that the map $\theta\colon \D\mapsto \D_\theta$ is a bijection, with the inverse of $\theta$ given by $\D(x, 1) = \max(0, \D_\theta(x))$ and $\D(x, -1) = \max(0, - \D_\theta(x))$, for all $x\in X$. Moreover, it is easy to notice that if we endow $\Theta \subseteq [-1, 1]^X$ with half of the $l_1$ norm of a difference as a metric, just as we did for $\Delta$, then $\theta$ becomes an isometry between $\Delta$ and $\Theta$. Because of that, both \Cref{t:old-t7} and \Cref{c:old-t7-closed} remain valid if we use $\Theta$ instead of $\Delta$.

\begin{customthm}{D}[Lower bound for list replicable weak learners]
Let $\H$ be a concept class. Then, for any $0 < \varepsilon < 1/2$, it holds 
$\List(\H, \varepsilon)\geq \max\{ 1 + \lceil \vc(\H)/2\rceil, 1 + \lfloor \vc^\star(\H)/2\rfloor\}$.
\end{customthm}
\begin{proof}
	The statement effectively consists of the two bounds: $L\geq 1 + \lfloor \vc(\H)/2\rfloor$, and  $L\geq 1 + \lfloor \vc^*(\H)/2\rfloor$. We prove them separately, however, the proof strategy is similar: We identify the part of $\H$ that witnesses the corresponding dimension, find a subcomplex isomorphic to a sphere in $\Delta$, argue that the coloring induced by the weak learner via \Cref{c:old-t7-closed} is antipodal-free, and apply the local LS bound (\Cref{t:localLS}) to it. Thus, we give a detailed proof of the $\vc^\star$ lower bound, and then, for the proof of the $\vc$ lower bound we gloss over the common part, and concentrate on the differences between the cases. Throughout the proof, we assume that $Y = \{0, 1\}$, instead of $Y=\{\pm1\}$ used elsewhere.

	\emph{($L\geq 1 + \lfloor \vc^*(\H)/2\rfloor$).} As $\List(\H, \varepsilon)$ is increasing in $\H$, without losing generality we can assume that $\H$ is the minimal class witnessing $\vc^\star(\H) = m$. That is, $\H$ is the class $\H_m$ from \Cref{t:finite}.
	
	We will identify the domain $X$ with $\P([m])$.	Again, let $\Delta = \Delta(\H)$. For $A\subseteq [m]$ such that $A\neq \emptyset, [m]$, let us define a distribution $\D_A$ on $X \times {\{0,1\}}$ that assigns probabilities $1/2$ to examples $(A, 1)$ and $([m] - A, 0)$. Note that, for any $i\in [m]$, $L(\D_A, h_i)=0$ if $i\in A$ and $L(\D_A, h_i)=1$ if $i\in [m] - A$; in particular, $\D_A$ is realizable by $\H_m$. Let us take a chain of sets $\emptyset \subsetneq A_1 \subsetneq \dots \subsetneq A_k \subsetneq [m]$. For any  $\lambda_1, \dots, \lambda_k$ such that $0<\lambda_i$ and $\sum_{i=1, \dots, k} \lambda_i = 1$, let 
	$\D = \sum_{i=1, \dots, k} \lambda_i \D_{A_i}$. It is easy to see that thus constructed $\D$ is also a distribution realizable by $\H_m$, with $L(\D, h_i)=0$ for any $i\in A_1$ and $L(\D, h_i)=1$ for any $i\in [m] - A_k$; in particular, $\D$ is realizable.
	
	Now, let $\Delta_R\subseteq \Delta$ be a simlicial complex with vertices $\D_A$, for all $A\subseteq [m]$ such that $A\neq \emptyset, [m]$, and with simplices spanned by $\D_{A_1}, \dots, \D_{A_k}$, for all chains $\emptyset\subsetneq A_1\subsetneq \dots \subsetneq A_k\subsetneq [m]$. As argued above, all distributions in $\Delta_R$ are realizable, that is, $\Delta_R$ is indeed a subspace of $\Delta$. Moreover, any two simplices of $\Delta_R$ can intersect only on a subsimplex, and so $\Delta_R$ is a valid geometric realization of the similarly constructed abstract simplicial complex. Note that as an abstract simplicial complex, $\Delta_R$ is the barycentric subdivision of the boundary of the $(m-1)$-dimensional simplex. Thus, $\Delta_R$ is a topological sphere $\S^{m-2}$.
	
	The continuous fixed-point free involution $\nu\colon \Delta_R\rightarrow \Delta_R$ is defined as in \Cref{def-barycentric}. Namely, for $\D\in \Delta_R$ with the barycentric representation $\D = \sum_{i=1, \dots, k} \lambda_i \D_{A_i}$,
		$$ \nu(\D) = \sum_{i=1, \dots, k} \lambda_i \D_{[m] - A_i}.$$
		
	We are now going to show that for $\varepsilon < 1/2$, any closed cover $\F = \left\{C_h~|~h\in \{0,1\}^X\right\}$ of $\Delta_R$ that satisfies the population loss condition from \Cref{c:old-t7-closed}, that is, such that $L(\D, h) \leq \varepsilon$ for all $h\in \{0,1\}^X$ and $\D\in C_h$, should have overlap-degree at least $1 + \lfloor m/2\rfloor$. This will be done through an application of \Cref{t:localLS}. By \Cref{c:old-t7-closed}, this would clearly yield the desired lower bound on $\List(\H_m, \varepsilon)$.
	
	So let $\F$ be such closed cover. We argue that every $C_h$ is antipodal-free. Towards a contradiction, suppose that $C_h$ contains $\D=\sum_{i=1, \dots, k} \lambda_i \D_{A_i}$ and $\nu(\D) = \sum_{i=1, \dots, k} \lambda_i \D_{[m] - A_i}$. Then
	\begin{align*}
		L(\D, h) &= L\left(\sum_{i=1, \dots, k} \lambda_i \D_{A_i}, h\right) 
		= \sum_{i=1, \dots, k} \lambda_i L(\D_{A_i}, h) \\
		&= \sum_{i=1, \dots, k} \lambda_i \left(\frac{1}{2}(1-h(A_i)) + \frac{1}{2}h\left([m] - A_i\right) \right) \\
		&= \frac{1}{2} + \frac{1}{2}\sum_{i=1, \dots, k} h\left([m] - A_i\right) - h(A_i).
	\end{align*}
	Similarly,
	$$L(\nu(\D), h) = \frac{1}{2} - \frac{1}{2}\sum_{i=1, \dots, k} h\left([m] - A_i\right) - h(A_i).$$
	Thus, $L(\D, h) + L(\nu(\D), h) = 1$, and so at most one of $\D$ and $\nu(\D)$ has loss less than $1/2$, and thus can belong to $C_h$, for $\varepsilon < 1/2$. After that, by \Cref{t:localLS}, the overlap-degree of $\F$ is at least $\lceil (m-2+3)/2\rceil = 1 + \lfloor m/2\rfloor$, as needed.
	
	\emph{($L\geq 1 + \lfloor \vc(\H)/2\rfloor$).}  As before, without losing generality, we assume that $\H$ is the minimal class witnessing $\vc(\H) = m$. That is, $X = [m]$ and $\H$ is the class of the characteristic functions of all subsets of $[m]$.
	
	Let $\Theta = \Theta(\H)$. Then 
		$$ \Theta = \left\{\D_\theta\in [-1, 1]^{X}~:~ \|\D_\theta\|_1=1\right\},$$
	where $\|\cdot\|_1$ is an $l_1$ norm, that is, $\|\D_\theta\|_1 = \sum_{x\in X} |\D_\theta(x)| = \sum_{x, i\in X\times \{0,1\}} \D(x, i)$. In particular, $\Theta$ is an $(m-1)$-dimensional $l_1$-ball, and thus is a topological sphere $\S^{m-1}$. Let us define the continuous fixed-point free involution $\nu\colon \Theta\rightarrow \Theta$ as $\nu(\D_\theta) = -\D_\theta$. Then it is easy to check that we again have the same property on the population loss as before:  $L(\D_\theta, h) + L(\nu(\D_\theta), h) = 1$, for every $h\in \H$.
	
	Then, for $\varepsilon < 1/2$, any closed cover $\F = \left\{C_h~|~h\in \{0,1\}^X \right\}$ of $\Theta$ that satisfies $L(\D, h) \leq \varepsilon$, for all $h\in \{0,1\}^X$ and $\D\in C_h$, is antipodal-free. Hence, by  \Cref{t:localLS}, the overlap-degree of $\F$ is at least $\lceil (m-1+3)/2\rceil = 1 + \lceil m/2\rceil$. By \Cref{c:old-t7-closed}, this implies the same lower bound on $\List(\H, \varepsilon)$, that is, $\List(\H, \varepsilon)\geq 1 + \lceil m/2\rceil$.
\end{proof}

\begin{customthm}{E}[Finite classes]
For every class $\H$  of size $m$ we have
$\List(\H)\leq 1 + \lfloor m/2\rfloor$. 
This is sharp, as witnessed by the class $\H_m=\{h_i : i\leq m\}$ 
of $m$ projection functions on $X=\{0,1\}^m$: $h_i(x)=x_i$ for every $x\in X$ and $i$. For $\H_m$, $\vc^*(\H_m) = m$, and so $\List(\H_m) = \List(\H_m, \varepsilon)= 1 + \lfloor m/2\rfloor$, for any $0<\varepsilon<1/2$. 
\end{customthm}
It is possible, and would be easier, to prove the upper bound on the list replicability number of finite classes via \Cref{c:old-t7-closed}, that is, by presenting a suitable coloring of $\Delta(\H)$. However, to make it more palpable, we do it directly, by presenting the required algorithm.
\begin{proof}
	We only need to prove the $\List(\H)\leq 1 + \lfloor m/2\rfloor$ bound for a class $\H$ of size $m$.
    We assume that the domain $\X$
    is finite: indeed, by identifying domain points that are same-valued by all hypotheses in $\H$ we may assume without loss of generality that $\lvert \X\rvert = M\leq 2^m$. Let $h_\maj\colon X\rightarrow \{0,1\}$ be the \emph{majority vote} of $\H$, that is, an hypothesis such that for any $x\in X$, $\left|\left\{h\in \H~:~ h(x)= h_\maj(x)\right\}\right|\geq |\H|/2$. 
	
	Let us pick arbitrary $\varepsilon>0$ and $\delta>0$ from the definition of a list-replicable learner, and let $n = n(\varepsilon, \delta)$, to be specified later, be big enough. For a realizable by $\H$ distribution $\D$, let $\hat{\D} = \hat{\D}(S)$ be the empirical distribution induced by a sample $S\sim \D^n$. Note that, by construction, $L_S(h) = L_{\hat\D}(h)$, for any $h\in \{0,1\}^X$. Also note that, for any distributions $\D_1$ and $\D_2$ and any $h\in \{0,1\}^X$, $|L_{\D_1}(h) - L_{\D_2}(h)| \leq  \|\D_1 - \D_2\|_1 = 2\TV(\D_1, \D_2)$. By taking $n$ sufficiently big, we can ensure that for any $e >0$, $\hat{\D}$ is $e$-close to $\D$ in total variation with probability at least $1-\delta$. Note that this estimate uses the finiteness of $\H$ and is independent of $\D$.  

    Let us now define a learning rule $\A$ as follows:

    \begin{tcolorbox}
    \begin{center}
    An $(\varepsilon,L)$-list replicable learner $\A$ of $\H$, for $|\H| = m$ and $L\leq 1 + \lfloor m/2\rfloor$.
    \end{center}
    
    \ \ \ \
    {\bf Input:} A sample $S\sim \D^n$, where $\D$ is some distribution, realizable by $\H$.
    
    \medskip
    \begin{enumerate}
    \item If  $L_S(h_\maj)\leq (2M + 2)e$, output $h_\maj$;
    \item Otherwise, output any $h\in \H$, consistent with $S$, with arbitrary probability.
    \end{enumerate}
    \end{tcolorbox}    
	
	If $\D$ is such that the event $[\TV(\D, \hat\D) \leq e]$, for $S\sim \D^n$, is a subset of the event $[L_S(h_\maj)\leq (2M+2)e]$, then, by construction, $\A(S) = h_\maj$ with probability at least $1-\delta$. Also, for arbitrary $\hat\D = \hat\D(S)$, witnessing both events, $|L_\D(h_\maj) - L_S(h_\maj)| \leq 2\TV(\D, \hat\D) \leq 2e$, and so $L_\D(h_\maj) \leq L_S(h_\maj) + 2e \leq (2M+4)e$. So, for such $\D$, the replicable list $\L(\D) $ can be defined as just $\{h_\maj\}$.
	
	Alternatively, suppose that $[\TV(\D, \hat\D) \leq e] \subsetneq [L_S(h_\maj)\leq (2M+2)e]$. That is, there is $\hat\D = \hat\D(S)$ such that $\TV(\D, \hat\D) \leq e$ and $L_S(h_\maj)> (2M+2)e$. Then $L_\D(h_\maj) > 2Me$. In particular, there is $x_0\in X$ such that $\D(x_0, -h_\maj(x_0)) > 2e$. 
	
	Let us now define $\L = \L(\D)$ as consisting of the following hypotheses:
	\begin{itemize}
		\item $h_\maj$, whenever $L_\D(h_\maj)\leq (2M+4)e$;
		\item those $h\in \H - \{h_\maj\}$ for which $L_\D(h)\leq 2e$.
	\end{itemize}
	Then for any $h\in \L(\D) - \{h_\maj\}$, this implies $h(x_0) = -h_\maj(x_0)$, as otherwise $L_\D(h) \geq \D(x_0, -h_\maj(x_0)) > 2e$.
	So, by the properties of $h_\maj$, $|\L| \leq 1 + \lceil m/2\rceil$. Also, $L_\D(h)$ is linearly bounded with $e$, for any $h\in \L(\D)$. Finally, for any $S\sim \D^n$ such that $\TV(\D, \hat\D) \leq e$, $\A(S)$ is either $h_\maj$, or not $h_\maj$. 
	In the first case, $L_S(h_\maj)\leq (2M + 2)e$, and so $L_D(h_\maj)\leq L_S(h_\maj) + 2\TV(\D, \hat\D)  \leq  (2M + 4)e$, and so $h_\maj\in \L(\D)$. In the second, for $h=\A(S)$, by construction, $L_S(h)=0$. Thus, $L_D(h)\leq L_S(h) + 2\TV(\D, \hat\D)  \leq  2e$, and again, $h\in \L(\D)$. So, $[\TV(\D, \hat\D) \leq e]\subseteq [\A(S)\in \L(\D)]$, and so the probability that $\A(S)$ is in $\L$ is at least $1-\delta$. The proof is finished by taking $n$ sufficiently big so that $e = e(n) \leq \varepsilon/(2M + 4)$.
\end{proof}

\bibliographystyle{plainnat}
%
\bibliography{library}

\end{document}